
\documentclass{article}

\usepackage{microtype}
\usepackage{graphicx}
\usepackage{booktabs} 
\usepackage{amsthm}
\usepackage{amsmath}
\usepackage{amsfonts}

\usepackage{array}
\usepackage{amsmath}
\usepackage{graphicx}
\usepackage{makecell}
\usepackage{multicol, blindtext}
\usepackage{soul,color,xcolor}
\usepackage{multirow}
\usepackage[english]{babel}
\usepackage{amssymb}
\usepackage{tabularx}
\usepackage{bm}
\usepackage{comment}
\usepackage{float}
\usepackage{algorithm}
\usepackage{caption}
\usepackage{subfig}
\usepackage{wrapfig}

\newtheorem{theorem}{Theorem}
\newtheorem{lemma}{Lemma}
\newtheorem{assumption}{Assumption}
\newtheorem{definition}{Definition}


\newcommand{\vc}[1]{\mathbf{#1}}
\newcommand{\ci}{\perp\!\!\!\perp}
\newcommand{\cl}[1]{\mathcal{#1}}
\newcommand{\bb}[1]{\mathbb{#1}}
\newcommand{\hist}[1]{\bar{\vc{#1}}}
\newcommand{\histcl}[1]{\bar{\cl{#1}}}

\usepackage{hyperref}


\usepackage[accepted]{icml2020}

\icmltitlerunning{Time Series Deconfounder: Estimating Treatment Effects over Time in the Presence of Hidden Confounders}

\begin{document}

\twocolumn[
\icmltitle{Time Series Deconfounder: Estimating Treatment Effects over Time in the Presence of Hidden Confounders}



\begin{icmlauthorlist}
	\icmlauthor{Ioana Bica}{ox,turing}
	\icmlauthor{Ahmed M. Alaa}{ucla}
	\icmlauthor{Mihaela van der Schaar}{turing,ucla,cam}
\end{icmlauthorlist}

\icmlaffiliation{ox}{University of Oxford, Oxford, United Kingdom}
\icmlaffiliation{turing}{The Alan Turing Institute, London, United Kingdom}
\icmlaffiliation{ucla}{UCLA, Los Angeles, USA}
\icmlaffiliation{cam}{University of Cambridge, Cambridge, United Kingdom}

\icmlcorrespondingauthor{Ioana Bica}{ioana.bica@eng.ox.ac.uk}

\icmlkeywords{Machine Learning, ICML}

\vskip 0.3in
]



\printAffiliationsAndNotice{} 

\begin{abstract}
The estimation of treatment effects is a pervasive problem in medicine. Existing methods for estimating treatment effects from longitudinal observational data assume that there are no hidden confounders, an assumption that is not testable in practice and, if it does not hold, leads to biased estimates. In this paper, we develop the Time Series Deconfounder, a method that leverages the assignment of multiple treatments over time to enable the estimation of treatment effects in the presence of multi-cause hidden confounders. The Time Series Deconfounder uses a novel recurrent neural network architecture with multitask output to build a factor model over time and infer latent variables that render the assigned treatments conditionally independent; then, it performs causal inference using these latent variables that act as substitutes for the multi-cause unobserved confounders. We provide a theoretical analysis for obtaining unbiased causal effects of time-varying exposures using the Time Series Deconfounder. Using both simulated and real data we show the effectiveness of our method in deconfounding the estimation of treatment responses over time. 
\end{abstract}

	\section{Introduction}
	
	Forecasting the patient's response to treatments assigned over time represents a crucial problem in the medical domain. The increasing availability of observational data makes it possible to learn individualized treatment responses from longitudinal disease trajectories containing information about patient covariates and treatment assignments \citep{robins2000marginal, robins2008estimation, schulam2017reliable, lim2018forecasting, bica2020estimating}. Existing methods for estimating individualized treatment effects over time assume that all confounders---variables affecting the treatment assignments and the potential outcomes---are observed, an assumption which is not testable in practice\footnote{Since counterfactuals are never observed, it is not possible to test for the existence of hidden confounders that could affect them.} and probably not true in many situations.
	
	To understand why the presence of hidden confounders introduces bias, consider the problem of estimating treatment effects for patients with cancer. They are often prescribed multiple treatments at the same time, including chemotherapy, radiotherapy and/or immunotherapy based on their tumor characteristics. These treatments are adjusted if the tumor size changes. The treatment strategy is also changed as the patient starts to develop drug resistance \citep{vlachostergios2018treatment} or the toxicity levels of the drugs increase \citep{kroschinsky2017new}. Drug resistance and toxicity levels are multi-cause confounders since they affect not only the multiple causes (treatments)\footnote{Causes and treatments are used interchangeably throughout the paper.}, but also the patient outcome (e.g. mortality, risk factors). However, drug resistance and toxicity may not be observed and, even if observed, may not be recorded in the electronic health records. Estimating, for instance, the effect of chemotherapy on the cancer progression in the patient without accounting for the dependence on drug resistance and toxicity levels (hidden confounders) will produce biased results.  
	
	\citet{wang2018blessings} developed theory for deconfounding---adjusting for the bias introduced by the existence of hidden confounders in observational data---in the \textit{static} causal inference setting and noted that the existence of multiple causes makes this task easier. \citet{wang2018blessings} observed that the dependencies in the assignment of multiple causes can be used to infer latent variables that render the causes independent and act as substitutes for the hidden confounders. 	
	
	In this paper, we propose the Time Series Deconfounder, a method that enables the unbiased estimation of treatment responses \textit{over time} in the presence of hidden confounders, by taking advantage of the  dependencies in the sequential assignment of multiple treatments.  We draw from the main idea in \citet{wang2018blessings}, but note that the estimation of hidden confounders in the longitudinal setting is significantly more complex than in the static setting, not just because the hidden confounders may vary over time but in particular because the hidden confounders may be affected by previous treatments and covariates. Thus, standard latent variable models are no longer applicable, as they cannot capture these time dependencies. 	
	
	The Time Series Deconfounder relies on building a factor model \textit{over time} to obtain latent variables which, together with the observed variables render the assigned causes conditionally independent. Through theoretical analysis we show that these latent variables can act as substitutes for the multi-cause unobserved confounders and can be used to satisfy the sequential strong ignorability condition in the potential outcomes framework for time-varying exposures \citep{robins2008estimation} and obtain unbiased estimates of individualized treatment responses, using weaker assumptions than standard methods. Following our theory, we propose a novel deep learning architecture, based on a recurrent neural network with multi-task outputs and variational dropout, to build such a factor model and infer the substitutes for the hidden confounders in practice. 
	
	The Time Series Deconfounder shifts the need for observing all multi-cause confounders (untestable condition) to constructing a good factor model over time (testable condition). To assess how well the factor model captures the distribution of assigned treatments, we extend the use of predictive checks \citep{rubin1984bayesianly, wang2018blessings} to the temporal setting and compute $p$-values at each timestep. We perform experiments on a simulated dataset where we control the amount of hidden confounding applied and on a real dataset with patients in the ICU \citep{johnson2016mimic} to  show how the Time Series Deconfounder allows us to deconfound the estimation of treatment responses in longitudinal data. To the best of our knowledge, this represents the first method for learning latent variables that can act as substitutes for the unobserved confounders in the time series setting.
	
	\section{Related Work}
	
	Previous methods for causal inference mostly focused on the static setting \citep{hill2011bayesian, wager2017estimation, alaa2017bayesian, shalit2017estimating,  yoon2018ganite, alaa2018limits,zhang2020learning, bica2020estimatingdose}, and less attention has been given to the time series setting.  We discuss methods for estimating treatment effects over time, as well as methods for inferring substitute hidden confounders in the static setting. 
	
	\textbf{Potential outcomes for time-varying treatment assignments}.  
	Standard methods for performing counterfactual inference in longitudinal data are found in the epidemiology literature and include the g-computation formula, g-estimation of structural nested mean models, and inverse probability of treatment weighting of marginal structural models \citep{robins1994correcting, robins2000marginal, robins2008estimation}. Alternatively, \cite{lim2018forecasting} improves on the standard marginal structural models by using recurrent neural networks to estimate the propensity weights and treatment responses, while \cite{bica2020estimating} propose using balancing representations to handle the time-dependent confounding bias when estimating treatment effects over time. 
    Despite the wide applicability of these methods in forecasting treatment responses, they are all based on the assumption that there are no hidden confounders.  Our paper proposes a method for deconfounding such outcome models, by inferring substitutes for the hidden confounders which can lead to unbiased estimates of the potential outcomes.
	
    The potential outcomes framework has been extended to the continuous-time setting by \cite{lok2008statistical}. Several methods have been proposed for estimating treatment responses in continuous time \citep{soleimani2017treatment, schulam2017reliable}, again assuming that there are no hidden confounders. Here, we focus on deconfounding the estimation of treatment responses in the discrete-time setting.
	
    Sensitivity analysis methods that evaluate the potential impact that an unmeasured confounder could have on the estimation of treatment effects have also been developed \cite{robins2000sensitivity, roy2016bayesian, scharfstein2018global}. However, these methods assess the suitability of applying existing tools, rather than propose a direct solution for handling the presence of hidden confounders in observational data. 
	
    \textbf{Latent variable models for estimating hidden confounders.} The most similar work to ours is the one of \citet{wang2018blessings}, who proposed the deconfounder, an algorithm that infers latent variables that act as substitutes for the hidden confounders and then performs causal inference in the static multi-cause setting. The deconfounder involves finding a good factor model of the assigned causes which can be used to estimate substitutes for the hidden confounders. Then, the deconfounder fits an outcome model for estimating the causal effects using the inferred latent variables. Our paper extends the theory for the deconfounder to the time-varying treatments setting and shows how the inferred latent variables can lead to sequential strong ignorability.  To estimate the substitute confounders, \citet{wang2018blessings} used standard factor models  \citep{tipping1999probabilistic, ranganath2015deep}, which are only applicable in the static setting. To build a factor model over time, we propose an RNN architecture with multitask output and variational dropout.
	
	Several other methods have been proposed for taking advantage of the multiplicity of assigned treatments in the static setting and capture shared latent confounding \cite{tran2017implicit, heckerman2018accounting, ranganath2018multiple}. These works are based on Pearl's causal framework \cite{pearl2009causality} and use structural equation models. Alternative methods for dealing with hidden confounders in the static setting use proxy variables as noisy substitutes for the confounders \cite{lash2014good, louizos2017causal, lee2018estimation}. 
	
     A different line of research involves performing causal discovery in the presence of hidden confounders \cite{spirtes2000causation}. In this context, several methods have been proposed to perform causal graphical model structure learning with latent variables \cite{leray2008causal, jabbari2017discovery, raghu2018comparison}. However, in this paper, we are not aiming to discover causal relationships between patient covariates over time. Instead, we improve existing methods for estimating the individualized effects of time-dependent treatments by accounting for multi-cause unobserved confounders.

	\section{Problem Formulation}\label{sec:problem_formulation}
	
	Let the random variables $\vc{X}_t^{(i)} \in \cl{X}_t$ be the time-dependent covariates, $\vc{A}^{(i)}_t = [A_{t1}^{(i)} \dots A_{tk}^{(i)}]\in \cl{A}_t$ be the possible assignment of $k$ treatments (causes) at timestep $t$ and let $\vc{Y}_{t+1}^{(i)} \in \mathcal{Y}_t$ be the observed outcomes for patient $(i)$. Treatments can be either binary and/or continuous. Static features, such as genetic information, do not change our theory, and, for simplicity, we assume they are part of the observed covariates.  
	
	The observational data for patient $(i)$, also known as the patient trajectory, consists of realizations of the previously described random variables $\zeta^{(i)} = \{\vc{x}^{(i)}_{t},  \vc{a}^{(i)}_{t}, \vc{y}^{(i)}_{t+1} \}_{t=1}^{T^{(i)}}$, with samples collected for $T^{(i)}$ discrete and regular timesteps. Electronic health records consist of data for $N$ independent patients $\cl{D} = \{\tau^{(i)}\}_{i=1}^{N}$. For simplicity, we omit the patient superscript $(i)$ unless it is explicitly needed.
	
	We leverage the potential outcomes framework proposed by \citet{rubin1978bayesian} and \citet{neyman1923applications}, and extended by \citet{robins2008estimation} to take into account time-varying treatments. Let $\vc{Y}(\bar{\vc{a}})$ be the potential outcome, either factual or counterfactual, for each possible course of treatment $\bar{\vc{a}}$. 
	
	Let $\bar{\vc{A}}_{t} = (\vc{A}_1, \dots, \vc{A}_t)\in\histcl{A}_{t}$ be the history of treatments and let $\bar{\vc{X}}_{t} = (\vc{X}_1, \dots, \vc{X}_t)  \in \bar{\cl{X}}_t$ be the history of covariates until timestep $t$. For each patient, we want to estimate individualized treatment effects, i.e. potential outcomes conditional on the patient history of covariates and treatments:
	\begin{equation}
	    \mathbb{E}[\vc{Y}(\hist{a}_{\geq t}) \mid \hist{A}_{t-1}, \hist{X}_t],
	\end{equation}
	for any possible treatment plan $\hist{a}_{\geq t}$ that starts at timestep $t$ and consists of a sequence of treatments that ends just before the patient outcome $\vc{Y}$ is observed. The observational data can be used to fit a regression model to estimate $\mathbb{E} [\vc{Y} \mid \hist{a}_{\geq t}, \hist{A}_{t-1}, \hist{X}_t]$. Under certain assumptions, these estimates are unbiased so that $\mathbb{E} [\vc{Y}(\hist{a}_{\geq t}) \mid  \hist{X}_t, \hist{A}_{t-1} ] = \mathbb{E} [\vc{Y} \mid \hist{a}_{\geq t}, \hist{A}_{t-1}, \hist{X}_t]$. These conditions include Assumptions 1 and 2, which are standard among the existing methods for estimating treatment effects over time and can be tested in practice \citep{robins2008estimation}. 
	
	
	
	
	\begin{assumption}
	\textbf{Consistency}.  If $\bar{\vc{A}}_{\geq t} = \hist{a}_{\geq t}$, then the potential outcomes for following the treatment plan $\hist{a}_{\geq t}$ is the same as the observed (factual) outcome $\vc{Y}(\bar{\vc{a}}_{\geq t}) = \vc{Y}$.
	\end{assumption}
	
	\begin{assumption}
	\textbf{Positivity (Overlap)} \cite{imai2004causal}: 
	If $P( \hist{A}_{t-1} = \hist{a}_{t-1}, \hist{X}_t = \hist{x}_t)\neq 0$ then $P(\vc{A}_t = \vc{a}_t \mid \hist{A}_{t-1} = \hist{a}_{t-1}, \hist{X}_t = \hist{x}_t) > 0$ for all $\vc{a}_t$.
	\end{assumption}
	
	The positivity assumption means that at each timestep $t$, each treatment has a non-zero probability of being given to the patient. This assumption is testable in practice. 
	
	In addition to these two assumptions, existing methods also assume \textit{sequential strong ignorability}:
	\begin{equation}
	\vc{Y}(\hist{a}_{\geq t}) \ci \vc{A}_t \mid \hist{A}_{t-1}, \hist{X}_t,
	\end{equation}
	for all possible treatment plans $\hist{a}_{\geq t}$ and  for all $t\in \{0, \dots, T\}$. This condition holds if there are no hidden confounders and it cannot be tested in practice. To understand why this is the case, note that the sequential strong ignorability assumption requires the conditional independence of the treatments with all of the potential outcomes, both factual and counterfactual, conditional on the patient history. Since the counterfactuals are never observed, it is not possible to test for this conditional independence. 
	
	In this paper, we assume that there are hidden confounders.  Consequently, using standard methods for computing $\mathbb{E} [\vc{Y} \mid \hist{a}_{\geq t}, \hist{A}_{t-1}, \hist{X}_t ]$ from the dataset will result in biased estimates since the hidden confounders introduce a dependence between the treatments at each timestep and the potential outcomes ($\vc{Y}(\hist{a}_{\geq t}) \not \ci \vc{A}_t \mid \hist{A}_{t-1}, \hist{X}_t$) and therefore: 
	\begin{equation}
	    \mathbb{E} [\vc{Y}(\hist{a}_{\geq t}) \mid   \hist{A}_{t-1}, \hist{X}_t ] \neq  \mathbb{E} [\vc{Y} \mid \hist{a}_{\geq t}, \hist{A}_{t-1}, \hist{X}_t].
	\end{equation}
	
	By extending the method proposed by \citet{wang2018blessings}, we take advantage of the multiple treatment assignments at each timestep to infer a sequence of latent variables $\hist{Z}_{t} = (\vc{Z}_1, \dots, \vc{Z}_t) \in \histcl{Z}_t$ that can be used as substitutes for the unobserved confounders. We will then show how $\hist{Z}_t$ can be used to estimate the treatment effects over time. 
	
	\begin{figure*}[ht]
		\begin{center}
		\centerline{\includegraphics[width=2.0\columnwidth]{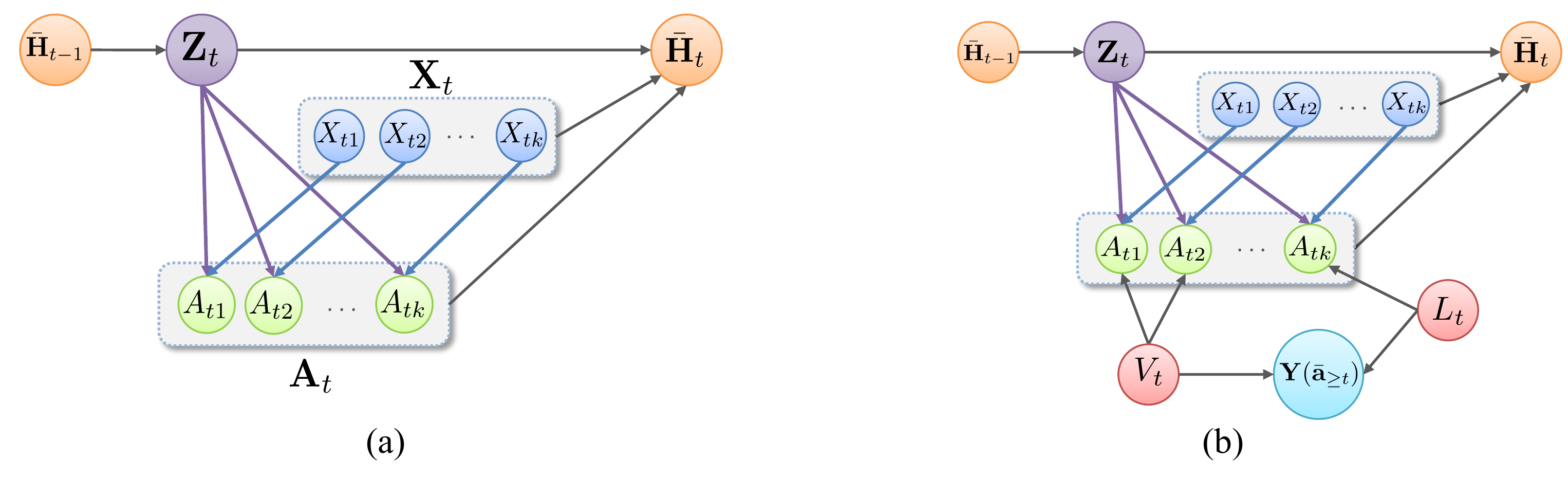}}
			\caption{(a) Graphical factor model. Each $\vc{Z}_t$ is built as a function of the history, such that, with $\vc{X}_t$, it renders the assigned causes conditionally independent: $p(a_{t1}, \dots, a_{tk}\mid \vc{z}_t, \vc{x}_t) = \prod_{j=1}^{k} p(a_{tj} \mid \vc{z}_t, \vc{x}_t)$. The variables can be connected to $\vc{Y}(\hist{a}_{\geq t})$ in any way.  (b) Graphical model explanation for why this factor model construction ensures that $\vc{Z}_t$ captures all of the multi-cause hidden confounders.  }
			\label{fig:graphical_model}
		\end{center}
		\vspace{-0.8cm}
	\end{figure*}
	
	\section{Time Series Deconfounder} \label{sec:deconfounding}
	
	The idea behind the Time Series Deconfounder is that multi-cause confounders introduce dependencies between the  treatments. As treatment assignments change over time we infer substitutes for the hidden confounders that take advantage of the patient history to capture these dependencies. 
	
	\subsection{Factor Model} 
	
	The Time Series Deconfounder builds a factor model to capture the distribution of the causes (treatments) over time.  At time $t$, the factor model constructs the latent variable $\vc{z}_t = g(\hist{h}_{t-1})$, where $\hist{h}_{t-1} = (\hist{a}_{t-1}, \hist{x}_{t-1}, \hist{z}_{t-1})$ is the realization of history $\hist{H}_{t-1}$. The latent variable $\vc{z}_t$, together with the observed patient covariates $\vc{x}_t$, render the assigned treatments conditionally independent:
	\begin{equation}
		p(a_{t1}, \dots, a_{tk}\mid \vc{z}_t, \vc{x}_t) = \prod_{j=1}^{k} p(a_{tj} \mid \vc{z}_t, \vc{x}_t).
	\end{equation}
    Figure \ref{fig:graphical_model}(a) illustrates the corresponding graphical model for timestep $t$. The factor model of the assigned treatments is built as a latent variable model with joint distribution:
	\begin{equation}
	\begin{aligned}
	p(\theta_{1:k}, \hist{x}_{T}, & \hist{z}_{T},  \hist{a}_{T})  =   \, p(\theta_{1:k}) p(\hist{x}_T) \cdot \\
	& \prod_{t=1}^{T} \big(p(\vc{z}_{t}\mid  \, \hist{h}_{t-1}) \prod_{j=1}^{k} p(a_{tj} \mid \vc{z}_t, \vc{x}_t, \theta_j) \big), 
	\end{aligned}
	\end{equation}
	where $\theta_{1:k}$ are parameters. The distribution of the treatments $p(\hist{a}_{T})$ is the corresponding marginal. Notice that we do not assume that, in the observational data, the patient covariates $\vc{x}_t$ at timestep $t$ are independent of the patient history. The graphical factor model shows how the latent variables $\vc{z}_t$ that can act as substitutes for the multi-cause unobserved confounders are built. As we will see in Section~\ref{sec:outcome_model} these latent variables will be used as part of an outcome model that estimates the potential outcomes $\vc{Y}(\hist{a}_{\geq t})$.

	By taking advantage of the dependencies between the multiple treatment assignments, the factor model allows us to infer the sequence of latent variables $\hist{Z}_t$ that can be used to render the assigned causes conditionally independent. Through this factor model construction and under correct model specifications, we can rule out the existence of other multi-cause confounders that are not captured by $\vc{Z}_t$. To understand why this is the case, consider the graphical model in  Figure \ref{fig:graphical_model}(b). By contradiction, assume that there exists another multi-cause confounder $V_t$ not captured by $\vc{Z}_t$. Then, by $d$-separation the conditional independence between the assigned causes given $\vc{Z}_t$ and $\vc{X}_t$ does not hold anymore. This argument cannot be used for single-cause confounders, such as $L_t$, which are only affecting one of the causes and the potential outcomes. Thus, we assume sequential single strong ignorability (no hidden single cause confounders). 
	
	\begin{assumption}
	\textbf{Sequential single strong ignorability:}\label{eq:single_strong_ignorability}
	\begin{equation}
	    \vc{Y}(\hist{a}_{\geq t})  \ci A_{tj} \mid \vc{X}_{t}, \bar{\vc{H}}_{t-1},
	\end{equation}
	for all $\hist{a}_{\geq t}$, for all $t\in \{0, \dots, T\}$, and for all $j \in \{1, \dots, k\}$.
	\end{assumption}
	
	Causal inference relies on assumptions. Existing methods for estimating treatment effects over time assume that there are no multi-cause and no single-cause hidden confounders. In this paper, we make the \textit{weaker} assumption that there are no single-cause hidden confounders. While this assumption is also untestable in practice, as the number of treatments increases for each timestep, it becomes increasingly weaker: the more treatments we observe, the more likely it becomes for a hidden confounder to affect multiple of the these treatments rather than a single one of them.
	
	\begin{theorem} \label{th:sequential_strong_ignorability}
	If the distribution of the assigned causes $p(\hist{a}_T)$ can be written as the factor model $p(\theta_{1:k}, \hist{x}_T, \hist{z}_T,  \hist{a}_T)$, we obtain \textit{sequential ignorable treatment assignment}:
	\begin{equation}
	\vc{Y}(\bar{\vc{a}}_{\geq t})  \ci  (A_{t1}, \dots, A_{tk}) \mid \hist{A}_{t-1},  \hist{X}_{t}, \hist{Z}_{t},
	\end{equation}
	for all $\hist{a}_{\geq t}$ and for all $t\in \{0, \dots, T\}$. 
	\end{theorem}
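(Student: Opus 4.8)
The plan is to fix an arbitrary timestep $t$ and reproduce, in the augmented-conditioning regime dictated by the factor model, the static ``single-cause-to-joint ignorability'' argument of \citet{wang2018blessings}, then observe that nothing in the argument singles out a particular $t$. First I would reconcile the two conditioning sets. Since the substitute confounder is a deterministic readout of the history, $\vc{z}_t = g(\hist{h}_{t-1})$ with $\hist{H}_{t-1} = (\hist{A}_{t-1}, \hist{X}_{t-1}, \hist{Z}_{t-1})$, conditioning on $(\vc{X}_t, \hist{H}_{t-1})$ is equivalent to conditioning on $\cl{C} := (\hist{A}_{t-1}, \hist{X}_t, \hist{Z}_t)$: the former supplies $\hist{A}_{t-1}$, $\hist{X}_t$ and $\hist{Z}_{t-1}$ directly and fixes $\vc{Z}_t$ through $g$. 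Hence Assumption 3 can be rewritten as $\vc{Y}(\hist{a}_{\geq t}) \ci A_{tj} \mid \cl{C}$ for every $j$, which is exactly the single-cause version of the target with the correct conditioning set.

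The core is a Bayes'-rule computation. Writing $p(\vc{Y}(\hist{a}_{\geq t}) \mid a_{t1}, \dots, a_{tk}, \cl{C})$ via Bayes' rule introduces the three factors $p(a_{t1}, \dots, a_{tk} \mid \vc{Y}(\hist{a}_{\geq t}), \cl{C})$, $p(\vc{Y}(\hist{a}_{\geq t}) \mid \cl{C})$ and $p(a_{t1}, \dots, a_{tk} \mid \cl{C})$. From the generative factorization (eq.\ 5, under correct model specification) each $a_{tj}$ depends on the history only through $(\vc{z}_t, \vc{x}_t)$, both of which lie in $\cl{C}$; therefore the factor-model conditional independence (eq.\ 4) lifts to the full conditioning set, giving $p(a_{t1}, \dots, a_{tk} \mid \cl{C}) = \prod_{j=1}^{k} p(a_{tj} \mid \cl{C})$.

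The decisive step is to show the same factorization survives after the potential outcome is added to the conditioning set, i.e.\ $p(a_{t1}, \dots, a_{tk} \mid \vc{Y}(\hist{a}_{\geq t}), \cl{C}) = \prod_{j=1}^{k} p(a_{tj} \mid \vc{Y}(\hist{a}_{\geq t}), \cl{C})$. I would justify this from the model structure: conditional on $\cl{C}$ each $A_{tj}$ is drawn from its own mechanism $p(a_{tj}\mid \vc{z}_t,\vc{x}_t,\theta_j)$ with an independent exogenous noise, whereas $\vc{Y}(\hist{a}_{\geq t})$ is a function of the confounders $(\hist{Z}_t, \hist{X}_t)$ and of outcome noise exogenous to the treatment-assignment noises; since no multi-cause confounder escapes $\vc{Z}_t$ (the contradiction argument around Figure~\ref{fig:graphical_model}(b)) and no single-cause confounder exists (Assumption 3), the treatment noises stay mutually independent and independent of $\vc{Y}(\hist{a}_{\geq t})$ given $\cl{C}$. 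Applying single-cause ignorability factor by factor then replaces each $p(a_{tj}\mid \vc{Y}(\hist{a}_{\geq t}), \cl{C})$ by $p(a_{tj}\mid \cl{C})$, so $p(a_{t1}, \dots, a_{tk} \mid \vc{Y}(\hist{a}_{\geq t}), \cl{C}) = p(a_{t1}, \dots, a_{tk} \mid \cl{C})$. Substituting this back cancels the treatment factors in the Bayes' expression and yields $p(\vc{Y}(\hist{a}_{\geq t}) \mid a_{t1}, \dots, a_{tk}, \cl{C}) = p(\vc{Y}(\hist{a}_{\geq t}) \mid \cl{C})$, which is joint ignorability at timestep $t$. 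Since $t$ was arbitrary, and both Assumption 3 and the factor-model independence hold at every timestep (the $t=0$ case being degenerate with empty history), the statement follows for all $t \in \{0, \dots, T\}$.

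The hard part is the collider subtlety in the decisive step: conditioning on a common \emph{effect} of the treatments would ordinarily induce dependence among them, so the argument must lean on the fact that $\vc{Y}(\hist{a}_{\geq t})$ is the outcome under a \emph{fixed} intervention $\hist{a}_{\geq t}$ and hence does not share the stochastic assignment noise of the $A_{tj}$. Making this rigorous is exactly where the factor-model construction (no uncaptured multi-cause confounder) and the single-cause ignorability assumption are both indispensable, and it is the same delicate point that arises in the static deconfounder; I would treat it as the central thing to pin down rather than a routine manipulation.
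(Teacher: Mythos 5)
Your Bayes'-rule skeleton is sound, and you have correctly isolated where all the difficulty lives; the problem is that the proposal never actually proves that decisive step, and the justification you offer for it is circular. The step in question,
\begin{equation*}
p(a_{t1}, \dots, a_{tk} \mid \vc{Y}(\hist{a}_{\geq t}), \cl{C}) \;=\; \prod_{j=1}^{k} p(a_{tj} \mid \vc{Y}(\hist{a}_{\geq t}), \cl{C}), \qquad \cl{C} := (\hist{A}_{t-1}, \hist{X}_t, \hist{Z}_t),
\end{equation*}
cannot be read off from ``the model structure,'' because the factor model is a purely \emph{distributional} assumption: it factorizes $p(a_{t1},\dots,a_{tk}\mid \vc{z}_t,\vc{x}_t)$ and says nothing, by itself, about exogenous assignment noises or their relation to $\vc{Y}(\hist{a}_{\geq t})$. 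Asserting that ``the treatment noises stay mutually independent and independent of $\vc{Y}(\hist{a}_{\geq t})$ given $\cl{C}$'' is a restatement of the conclusion you need, not a consequence of anything you have established; likewise the $d$-separation argument around Figure~\ref{fig:graphical_model}(b) is informal motivation in the paper's main text, not a proof ingredient. Assumption 3 only supplies the \emph{marginal} statements $p(a_{tj}\mid \vc{Y}(\hist{a}_{\geq t}), \cl{C}) = p(a_{tj}\mid \cl{C})$, and upgrading a collection of single-cause conditional independences to the joint one is exactly the collider problem you flag --- it does not follow by routine manipulation from the factorization conditional on $\cl{C}$ alone.

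The paper closes precisely this hole with the machinery your plan leaves as an IOU: the \emph{sequential Kallenberg construction}. Via Lemma 2.22 of \citet{kallenberg2006foundations} (randomization of kernels, under the regularity condition that the cause spaces are Borel), each treatment is written as $A_{tj} = f_{tj}(\vc{Z}_t, \vc{X}_t, U_{tj})$ with $U_{tj} = h_{tj}(\theta_j, \omega_{tj})$ and $\omega_{tj}$ uniform; the factor-model factorization forces $\omega_{t1},\dots,\omega_{tk}$ to be jointly independent; Assumption 3 gives $\omega_{tj} \ci \vc{Y}(\hist{a}_{\geq t}) \mid \vc{X}_t, \hist{H}_{t-1}$ for each $j$; a chain-rule computation then upgrades these per-cause statements to joint independence of the whole noise vector, and since $\vc{Z}_t = g(\hist{H}_{t-1})$ the same holds with $\vc{Z}_t$ added to the conditioning set (this is your conditioning-set reconciliation, which is correct and is also used by the paper). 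Measurability of the $f_{tj}, h_{tj}$ then transfers the independence from the noises to the treatments, giving the theorem (the paper's Lemmas 2 and 1, respectively). In short, your proposal is the paper's proof with its two lemmas replaced by the assertion that they hold: to complete it you must either invoke the randomization lemma to manufacture the jointly independent noises from the factor model, or find some other rigorous route to the factorization conditional on $\vc{Y}(\hist{a}_{\geq t})$ --- the heuristic structural-equation picture is not one.
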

	
	Theorem 1 is proved by leveraging Assumption \ref{eq:single_strong_ignorability}, the fact that the latent variables $\vc{Z}_t$ are inferred without knowledge of the potential outcomes $\vc{Y}(\hist{a}_{\geq t})$ and the fact that the causes $(A_{t1}, \dots, A_{tk})$ are jointly independent given $\vc{Z}_t$ and $\vc{X}_t$. The result means that, at each timestep, the variables $\hist{X}_{t}, \hist{Z}_{t}, \hist{A}_{t-1}$ contain all of the dependencies between the potential outcomes $\vc{Y}(\hist{a}_{\geq t})$ and the assigned causes $\vc{A}_{t}$.  See Appendix A for the full proof.
	
	As discussed in \citet{wang2018blessings}, the substitute confounders $\vc{Z}_t$ also need to satisfy positivity (Assumption 2), i.e. if $P(\hist{A}_{t-1} = \hist{a}_{t-1}, \hist{Z}_t = \hist{z}_t, \hist{X}_t = \hist{x}_t) \neq 0$ then $P(\vc{A}_t = \vc{a}_t \mid \hist{A}_{t-1} = \hist{a}_{t-1}, \hist{Z}_t = \hist{z}_t, \hist{X}_t = \hist{x}_t) > 0$ for all $\vc{a}_t$. After fitting the factor model, this can be tested \citep{robins2008estimation}.  When positivity is limited, the outcome model estimates of treatment responses will also have high variance. In practice, positivity can be enforced by setting the dimensionality of $\vc{Z}_t$ to be smaller than the number of treatments \citep{wang2018blessings}.

	\textbf{Predictive Checks over Time}: The theory holds if the fitted factor model captures well the distribution of assigned treatments. This condition can be assessed by extending predictive model checking \cite{rubin1984bayesianly}  to the time-series setting. We compute $p$-values over time to evaluate how similar the distribution of the treatments learned by the factor model is with the distribution of the treatments in a validation set of patients. 	At each timestep $t$, for the patients in the validation set, we obtain $M$ replicas of their treatment assignments $\{\vc{a}_{t, \text{rep}}^{(i)}\}_{i=1}^{M}$ by sampling from the factor model.  The replicated treatment assignments are compared with the actual treatment assignments, $\vc{a}_{t, \text{val}}$, using the test statistic $T(\vc{a}_t)$:
	\begin{equation}
	T(\vc{a}_t) = \bb{E}_Z[\text{log}\,\, p(\vc{a}_t \mid Z_t, X_t)],
	\end{equation}
	which is related to the marginal log likelihood \cite{wang2018blessings}. The predictive $p$-value for timestep $t$ is computed as follows: 
	\begin{equation}
	\dfrac{1}{M} \sum_{i=1}^{M} \mathbf{1} \left(T\left(\vc{a}_{t, \text{rep}}^{(i)}\right) < T\left(\vc{a}_{t, \text{val}}\right)\right),
	\end{equation}
	where $\mathbf{1}(\cdot)$ represents the indicator function. 
	
	If the model captures well the distribution of the assigned causes, then the test statistics for the treatment replicas are similar to the test statistics for the treatments in the validation set, which makes $0.5$ the ideal $p-$value in this case.

	\subsection{Outcome Model} \label{sec:outcome_model}
	
	If the factor model passes the predictive checks, the Time Series Deconfounder fits an outcome model \citep{robins2000marginal, lim2018forecasting}  to estimate individualized treatment effects over time. After sampling the sequence of latent variables $\hat{\hist{Z}}_t = (\hat{\vc{Z}}_1, \dots, \hat{\vc{Z}}_t)$ from the factor model, the outcome model can be used to estimate $\mathbb{E} [\vc{Y} \mid \hist{a}_{\geq t}, \hist{A}_{t-1}, \hist{X}_t, \hat{\hist{Z}}_t] = \mathbb{E} [\vc{Y}(\hist{a}_{\geq t}) \mid  \hist{A}_{t-1}, \hist{X}_t,  \hat{\hist{Z}}_t]$.

	To compute uncertainty estimates of the potential outcomes, we can sample $\hat{\hist{Z}}_t$ repeatedly and then fit an outcome model for each sample to obtain multiple point estimates of $\vc{Y}(\hist{a}_{\geq t})$. The variance of these point estimates will represent the uncertainty of the Time Series Deconfounder.
	
	\citet{d2019multi} raised some concerns about identifiability of the mean potential outcomes using the deconfounder framework in \citet{wang2018blessings} in the static setting and illustrated some pathological examples where identifiability might not hold.\footnote{See \citet{wang2018blessings} for a longer discussion addressing the concerns in \cite{d2019multi}.} In practical settings, the outcome estimates from the Time Series Deconfounder are identifiable, as supported by the experimental results in Sections \ref{sec:experiments_synthetic} and \ref{sec:experiments_mimic}. Nevertheless, when identifiability  represents an issue, the uncertainty in the potential outcomes can be used to assess the reliability of the Time Series Deconfounder. In particular, the variance in the potential outcomes indicates how the finite observational data inform the estimation of substitutes for the hidden confounders and subsequently the treatment outcomes of interest. When the treatment effects are non-identifiable, the estimates of the Time Series Deconfounder will have high variance. 
	
	By using this framework to estimate substitutes for the hidden confounders we are trading off confounding bias for estimation variance \citep{wang2018blessings}. The treatment effects computed without accounting for the hidden confounders will inevitably be biased. Alternatively, using the latent variables from the factor model will result in unbiased, but higher variance estimates of treatment effects. 
	
	\begin{figure*}[ht]
		\begin{center}
			\centerline{\includegraphics[width=1.99\columnwidth]{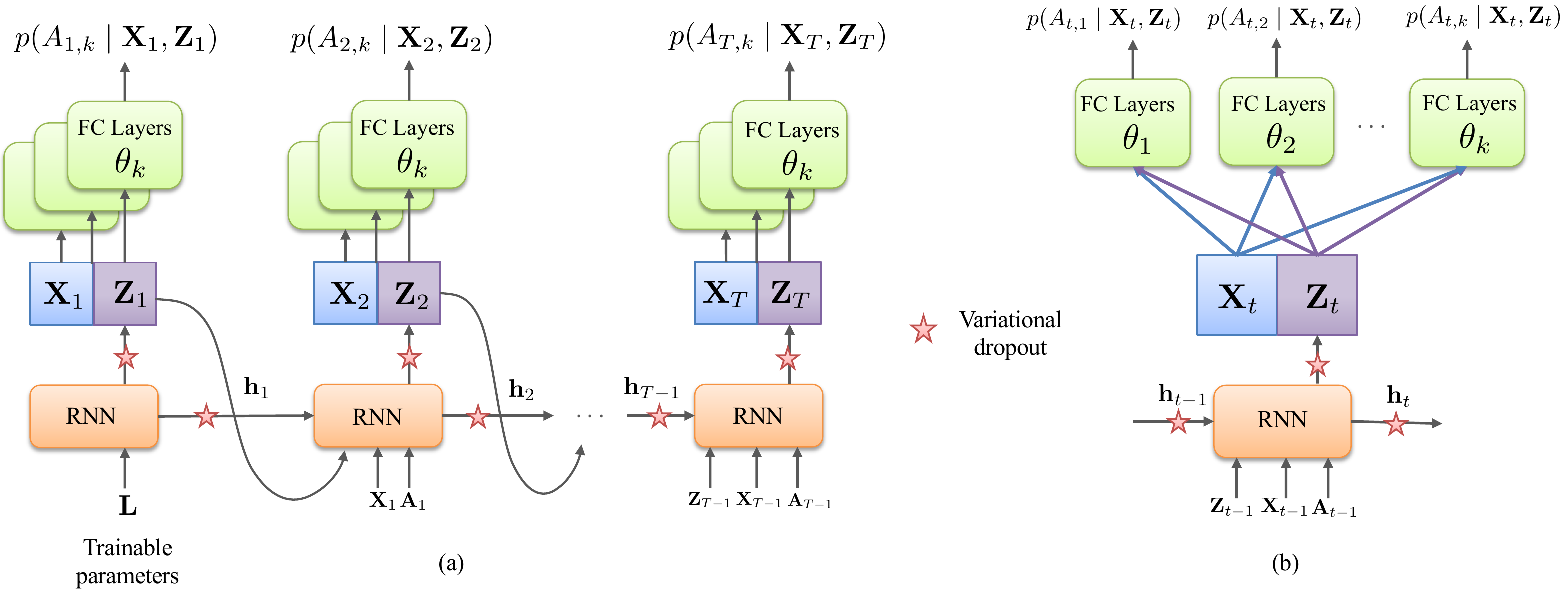}}
			\caption{(a) Proposed factor model implementation. $\vc{Z}_t$ is generated by RNN as a function of the history $\hist{H}_{t-1}$, given by the hidden state $\vc{h}_t$, and current input. Multitask output is used to construct the treatments such that they are independent given $\vc{Z}_t$ and $\vc{X}_t$. (b) Closer look at a single timestep.}
			\label{fig:factor_model}
		\end{center}
		\vspace{-0.8cm}
	\end{figure*}

	\section{Factor Model over Time in Practice} \label{sec:factor_model_implementation}
	
	Since we are dealing with time-varying treatments, we cannot use standard factor models, such as PCA \citep{tipping1999probabilistic} or Deep Exponential Families \citep{ranganath2015deep}, as they can only be applied in the static setting. Using the theory developed for the factor model over time we introduce a practical implementation based on a recurrent neural network (RNN) with multitask output and variational dropout as illustrated in Figure \ref{fig:factor_model}. 
	
	The recurrent part of the model infers the latent variables $\vc{Z}_t$ such that they depend on the patient history: 
	\begin{eqnarray}
    \vc{Z}_1 &=& \text{RNN}(\vc{L}), \\
    \vc{Z}_t &=& \text{RNN}(\hist{Z}_{t-1}, \hist{X}_{t-1}, \hist{A}_{t-1}, \vc{L}),
    \end{eqnarray}
    where $\vc{L}$ consists of randomly initialized  parameters that are trained with the rest of the parameters in the RNN. 
    
    The size of the RNN output is $D_Z$ and this specifies the size of the latent variables that are inferred as substitutes for the hidden confounders. In our experiments, we use an LSTM unit \citep{hochreiter1997long} as part of the RNN. Moreover, to infer the assigned treatments at timestep $t$, $\vc{A}_t = [A_{t1}, \dots, A_{tk}]$ such that they are conditionally independent given the latent variables $\vc{Z}_t$ and the observed covariates $\vc{X}_t$, we propose using multitask multilayer perceptrons (MLPs) consisting of fully connected (FC) layers: 
    \begin{equation}
    A_{tj} = \text{FC} (\vc{X}_t, \vc{Z}_t; \theta_j), 
    \end{equation} 
	for all $j = 1, \dots k$ and for all $t = 1, \dots T$, where $\theta_j$ are the parameters in the FC layers used to obtain $A_{tj}$. We use a single FC hidden layer before the output layer.  For binary treatments, the sigmoid activation is used in the output layer. For continuous treatments, MC dropout \citep{gal2016dropout} can instead be applied in the FC layers to obtain $p(A_{tj} \mid \vc{X}_j, \vc{Z}_j)$. 
	
	To model the probabilistic nature of factor models we incorporate \textit{variational dropout} \citep{gal2016theoretically} in the RNN as illustrated in Figure  \ref{fig:factor_model}. Using dropout enables us to obtain samples from $\vc{Z}_t$ and treatment assignments $A_{tj}$.  These samples allow us to obtain treatment replicas and to compute predictive checks over time, but also to estimate the \textit{uncertainty} in $\vc{Z}_t$ and potential outcomes. 
	
	Using the treatment assignments from the observational dataset, the factor model can be trained using gradient descent based methods. The proposed factor model architecture follows from the theory developed in Section \ref{sec:deconfounding} where at each timestep the latent variable $\vc{Z}_t$ is built as a function of the history (parametrized by an RNN). The multitask output is essential for modeling the conditional independence between the assigned treatments given the latent variables generated by the RNN and the observed covariates. The factor model can be extended to allow for irregularly sampled data by using a PhasedLSTM \cite{neil2016phased}.

	Note that our theory does not put restrictions on the factor model that can be used. Alternative factor models over time are generalized dynamic-factor model \cite{forni2000generalized, forni2005generalized} or factor-augmented vector autoregressive models \cite{bernanke2005measuring}. These come from the econometrics literature and explicitly model the dynamics in the data. The use of RNNs in the factor model enables us to learn complex relationships between $\hist{X}_t, \hist{Z}_t$, and $\hist{A}_t$ from the data, which is needed in medical applications involving complex diseases. Nevertheless, predictive checks should be used to assess any selected factor model.
	
	\section{Experiments on Synthetic Data} \label{sec:experiments_synthetic}
	
	To validate the theory developed in this paper, we perform experiments on synthetic data where we vary the effect of hidden confounding. It is not possible to validate the method on real datasets since the true extent of hidden confounding is never known \citep{wang2018blessings, louizos2017causal}.

	\subsection{Simulated Dataset} \label{sec:simulated_data}
	
	To keep the simulation process general, we propose building a dataset using $p$-order autoregressive processes. At each timestep $t$,  we simulate $k$ time-varying covariates $X_{t, k}$ representing single cause confounders and a multi-cause hidden confounder $Z_t$ as follows:
	\begin{eqnarray}
	X_{t, j} &=& \frac{1}{p}  \sum_{i=1}^p \left( \alpha_{i, j} X_{t-i, j} + \omega_{i, j} A_{t-i, j} \right)+ \eta_t \\ 
	Z_t &=& \frac{1}{p} \sum_{i=1}^{p}  ( \beta_i Z_{t-i} + \sum_{j=1}^k \lambda_{i,j} A_{t-i, j} ) + \epsilon_t,
	\end{eqnarray}
	
	for $j = 1, \dots, k$,  $\alpha_{i, k}, \lambda_{i,j} \sim  \cl{N}(0, 0.5^2)$, $\omega_{i, k}, \beta_i \sim \cl{N}(1 - (i/p), (1/p)^2)$, and $\eta_t, \epsilon_t \sim \cl{N}(0, 0.01^2)$. The value of $Z_t$ changes over time and is affected by the treatment assignments. 
	
	Each treatment assignment $A_{t,j}$ depends on the single-cause confounder $X_{t,j}$ and multi-cause hidden confounder $Z_t$:
	\begin{eqnarray}
	\vc{\pi}_{tj} &=& \gamma_A  \hat{Z}_{t} + (1-\gamma_A)  \hat{X}_{tj} \\
	A_{tj} \mid \vc{\pi}_{tj}  &\sim& \text{Bernoulli}(\sigma(\lambda\pi_{tj})), \,\, 
	\end{eqnarray}
	where $\hat{X}_{tj}$ and $\hat{Z}_{t}$ are the sum of the covariates and confounders respectively over the last $p$ timesteps, $\lambda = 15$, $\sigma(\cdot)$ is the sigmoid function and $\gamma_A$ controls the amount of hidden confounding applied to the treatment assignments. The outcome is also obtained as a function of the covariates and the hidden confounder:
	\begin{equation}
	\vc{Y}_{t+1} =\gamma_Y Z_{t+1} + (1-\gamma_Y) \Big(\dfrac{1}{k} \sum_{j=1}^k X_{t+1, j}\Big),
	\end{equation}
	where $\gamma_Y$ controls the amount of hidden confounding applied to the outcome. 	We simulate datasets consisting of 5000 patients, with trajectories between 20 and 30 timesteps, and $k=3$ covariates and treatments. To induce time dependencies we set $p=5$. Each dataset undergoes a 80/10/10 split for training, validation and testing respectively. Hyperparameter optimization is performed for each trained factor model as explained in Appendix B. Using the training observational dataset, we fit the Time Series Deconfounder to perform one-step ahead estimation of treatment responses.
	

	\subsection{Evaluating Factor Model using Predictive Checks}
	
	Our theory for using the inferred latent variables as substitutes for the hidden confounders and obtain unbiased treatment responses relies on the fact that the factor model captures well the distribution of the assigned causes.
	\begin{figure}[H]
		\vspace{-0.4cm}
		\begin{center}
		\includegraphics[width=\columnwidth]{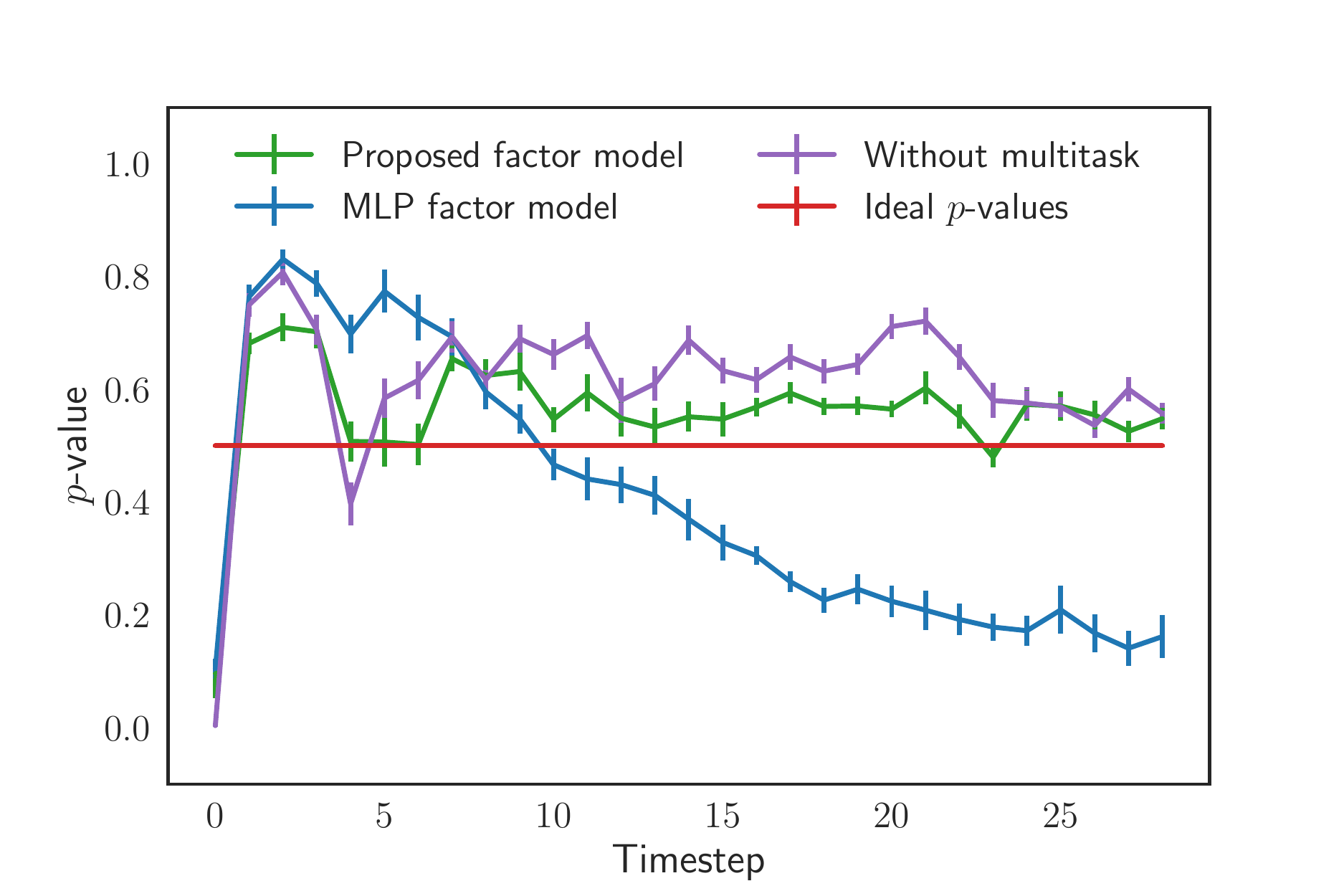}
		\end{center}
		\caption{Predictive checks over time. We show the mean $p$-values at each timestep and the std error. }
		\vspace{-0.37cm}
		\label{fig:predictive_checks}
	\end{figure}
	
		\begin{figure*}[t]
		\begin{center}
			\centerline{\includegraphics[width=2.0\columnwidth]{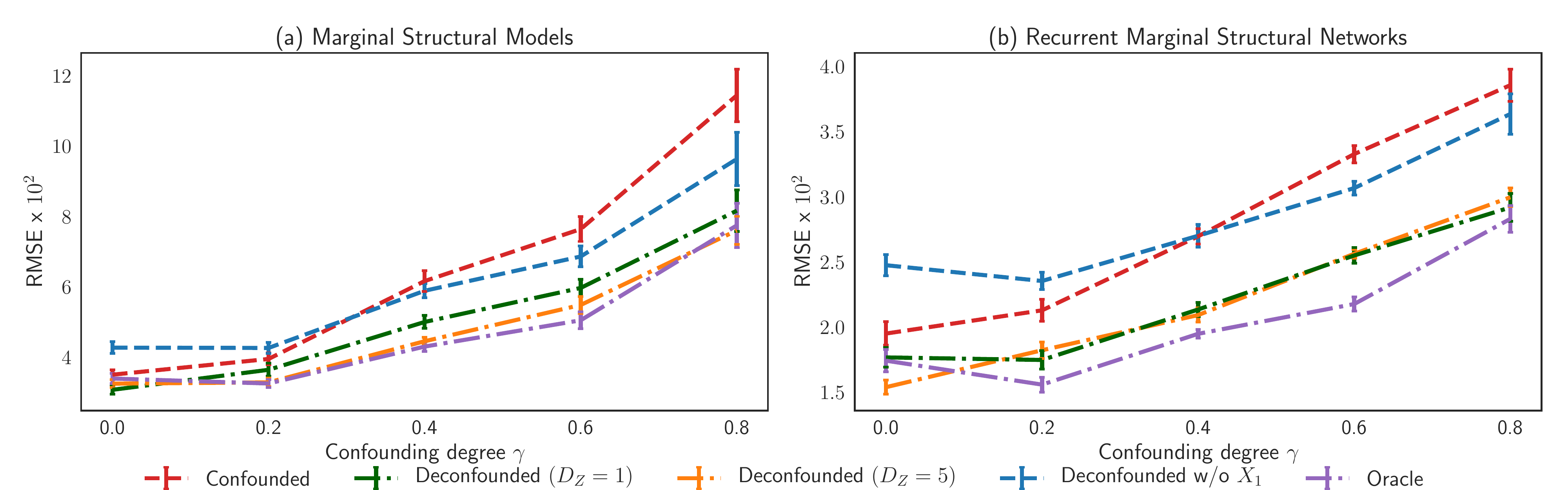}}
			\caption{Results for deconfounding the one-step ahead estimation of treatment responses in two outcome models: (a) Marginal Structural Models (MSM) and (b) Recurrent Marginal Structural Networks (R-MSN). The average RMSE and the standard error in the results are computed for 30 dataset simulations for each different degree of confounding, as measured by $\gamma$. }
			\label{fig:deconfounding_eval}
		\end{center}
		\vspace{-0.5cm}
	\end{figure*}

	To assess the suitability of our proposed factor model architecture, we compare it with the following baselines: RNN without multitask output (predicting the $k$ treatment assignments by passing $\vc{X}_t$ and $\vc{Z}_t$ through a FC layer and output layer with $k$ neurons) and multilayer perceptron (MLP) used instead of the RNN at each timestep to generate $\vc{Z}_t$. The MLP factor model does not use the entire history for generating $\vc{Z}_t$. See Appendix C for details. 
	
	Figure \ref{fig:predictive_checks} shows the $p$-values over time computed for the test set in 30 simulated datasets with $\gamma_A = \gamma_Y = 0.5$. The $p$-values for the MLP factor model decrease over time, which means that there is a consistent distribution mismatch between the treatment assignments learned by this model and the ones in the test set. Conversely, the predictive checks for our proposed factor model are closer to the ideal $p$-value of 0.5. This illustrates that having an architecture capable of capturing time-dependencies and accumulating past information for inferring the latent confounders is crucial. Moreover, the performance for the RNN without multitask is similar to our model, which indicates that the factor model constraint does not affect the performance in capturing the distribution of the causes. 
	
	\subsection{Deconfounding the Estimation of Treatment Responses over Time} \label{sec:experiments_deconfounding}
	
	We evaluate how well the Time Series Deconfounder  \footnote{The implementation of the Time Series Deconfounder can be found at \url{https://bitbucket.org/mvdschaar/mlforhealthlabpub/src/master/alg/time_series_deconfounder/} and at \url{https://github.com/ioanabica/Time-Series-Deconfounder}.} can remove hidden confounding bias when used in conjunction with the following outcome models:

	\textbf{Standard Marginal Structural Models (MSMs)}. MSMs \citep{robins2000marginal, hernan2001marginal} have been widely used in epidemiology to estimate treatment effects over time. MSMs use inverse probability of treatment weighting (IPTW) to adjust for the time-dependent confounding bias present in observational datasets \cite{mansournia2017handling, bica2020real}. MSMs compute the propensity weights by using logistic regression; through IPTW, these models construct a pseudo-population from the observational data where the treatment assignment probability no longer depends on the time-varying confounders. The treatment responses over time are computed using linear regression. For full implementation details in Appendix D.1. 

	\textbf{Recurrent Marginal Structural Networks (R-MSNs)}. R-MSNs \citep{lim2018forecasting}  also use IPTW to remove the bias from time-dependent confounders when estimating treatment effects over time. However, R-MSNs  estimate the propensity scores using RNNs instead. The use of RNNs is more robust to changes in the treatment assignment policy. To estimate the treatment responses over time R-MSNs also use a model based on RNNs. For implementation details, see Appendix D.2. 
	
	Notice that these outcome models were also chosen because they are capable of handling multiple treatments (that may be assigned simultaneously) at the same timstep. In the simulated dataset, parameters $\gamma_A$ and $\gamma_Y$ control the amount of hidden confounding applied to the treatments  and outcomes respectively. We vary this amount through $\gamma_A = \gamma_Y = \gamma $. 
	
	The Time Series Deconfounder is used to obtain unbiased estimates of one-step ahead treatment responses. For a comparative evaluation, the outcome models are trained to estimate these treatment responses in the following scenarios:  without information about $\hist{Z}_t$ in which case they estimate $\mathbb{E}[\vc{Y}_{t+1}(\vc{a}_t)\mid \hist{A}_{t-1}, \hist{X}_t]$ (Confounded), with information about the simulated (oracle) $\hist{Z}_t$ which leads to the following regression model $\mathbb{E}[\vc{Y}_{t+1}(\vc{a}_t)\mid \hist{A}_{t-1}, \hist{X}_t, \hist{Z}_t]$ (Oracle), as well as after applying the Time Series Deconfounder with different model specifications and in this case $\mathbb{E}[\vc{Y}_{t+1}(\vc{a}_t)\mid \hist{A}_{t-1}, \hist{X}_t, \hat{\hist{Z}}_t]$ is estimated (Deconfounded). To highlight the importance of Assumption 3, we also apply the Time Series Deconfounder after removing the single-cause confounder $X_1$, thus violating the assumption.  
	
	Figure \ref{fig:deconfounding_eval} shows the root mean squared error (RMSE) for the one-step ahead estimation of treatment responses for patients in the test set. We notice that the Time Series Deconfounder gives unbiased estimates of treatment responses, i.e. close to the estimates obtained using the simulated (oracle) confounders.  The method is robust to model misspecification, performing similarly when $D_Z= 1$ (simulated size of hidden confounders) and when $D_Z=5$ (misspecified size of inferred confounders). When there are no hidden confounders ($\gamma = 0$), the extra information from $\hat{\hist{Z}}_t$ does not harm the estimations (although they have higher variance). 
	
	When the sequential single strong ignorability assumption (Assumption 3) is invalidated, namely when the single cause confounder $X_1$ is removed from the observational dataset, we obtain biased estimates of the treatment responses. The performance in this case, however, is comparable to the performance when there is no control for the unobserved confounders.     
	
    In Appendix E, we consider an experimental set-up with a 
    different simulated size for the hidden confounders (true $D_Z = 3$) and show results when the size of the hidden confounders is underestimated in the Time Series Deconfounder. We also include 
    additional results on a simulated setting with static hidden confounders. 
	
	\textbf{Source of gain:}   To understand the source of gain in the Time Series Deconfounder, consider why the outcome models fail in the scenarios when there are hidden confounders. MSMs and R-MSNs make the implicit assumption that the treatment assignments depend only on the observed history.  The existence of any multi-cause confounders not captured by the history results in biased estimates of both the propensity weights and of the outcomes. On the other hand, the construction in our factor model rules out the existence of any multi-cause confounders which are not captured by $\vc{Z}_t$. By augmenting the data available to the outcome models with the substitute confounders, we eliminate these biases.

		\begin{table*}[ht]
		\caption{Average  RMSE $\times 10^2$  and standard error in the results for predicting the effect of antibiotics, vassopressors and mechanical ventilator on three patient covariates. The results are for 10 runs.}
		\label{tab:mimic}
		\begin{center}
			\begin{small}
				\begin{tabular}{c|cc|cc|cc}
					\toprule
					& \multicolumn{2}{c|}{White blood cell count} & \multicolumn{2}{c|}{Blood pressure} & \multicolumn{2}{c}{Oxygen saturation} \\
					Outcome model  & MSM & R-MSN & MSM & R-MSN & MSM & R-MSN  \\
					\hline
					Confounded  & $3.90 \pm 0.00$  &  $2.91 \pm 0.05$ & $12.04 \pm 0.00$&  $10.29 \pm 0.05$ & $2.92 \pm 0.00$  & $1.74 \pm 0.03$ \\
					\hline
					Deconfounded ($D_Z = 1$) & $3.55 \pm 0.05$  & $2.62 \pm 0.07$ & $11.69 \pm 0.14$  &  $9.35 \pm 0.11$ &  $2.42 \pm 0.02$  &  $1.24 \pm 0.05$ \\
					Deconfounded ($D_Z = 5$)  &  $3.56 \pm 0.04$& $2.41 \pm 0.04$ & $11.63 \pm 0.10$ &  $9.45 \pm 0.10$  &$2.43 \pm 0.02$  &  $1.21 \pm 0.07$  \\
					Deconfounded ($D_Z = 10$)  & $3.58 \pm 0.03$  & $2.48 \pm 0.06$ & $11.66 \pm 0.14$  &  $9.20 \pm 0.12$ &  $2.42 \pm 0.01$ & $1.17 \pm 0.06$  \\
					Deconfounded ($D_Z = 20$)  & $3.54 \pm 0.04$ & $2.55 \pm 0.05$ & $11.57 \pm 0.12$  &  $9.63 \pm 0.14$ & $2.40\pm 0.01$ & $1.28 \pm 0.08$\\
					\bottomrule
				\end{tabular}
			\end{small}
		\end{center}
		\vspace{-0.2cm}
	\end{table*}
	
	\section{Experiments on MIMIC III} \label{sec:experiments_mimic}
	
    Using the Medical Information Mart for Intensive Care (MIMIC III) \citep{johnson2016mimic} database consisting of electronic health records from patients in the ICU, we show how the Time Series Deconfounder can be applied on a real dataset. From MIMIC III we extracted a dataset with 6256 patients for which there are three treatment options at each timestep: antibiotics, vasopressors, and mechanical ventilator (all of which can be applied simultaneously). These treatments are common in the ICU and are often used to treat patients with sepsis \citep{schmidt2016evaluation, scheeren2019current}.  For each patient, we extracted $25$ patient covariates consisting of lab tests and vital signs measured over time that affect the assignment of treatments. We used daily aggregates of the patient covariates and treatments and patient trajectories of up to $50$ timesteps. We estimate the effects of antibiotics, vasopressors, and mechanical ventilator on the following patient covariates: white blood cell count, blood pressure, and oxygen saturation. 
    
    Hidden confounding is present in the dataset as patient comorbidities and several lab tests were not included. However, since this is a real dataset, it is not possible to evaluate the extent of hidden confounding or to estimate the true (Oracle) treatment responses.
	
    Table \ref{tab:mimic} illustrates the RMSE when estimating one-step ahead treatment responses by using the MSM and R-MSN outcome models directly on the extracted dataset (Confounded) and after applying the Time Series Deconfounder and augmenting the dataset with the substitutes for the hidden confounders of different dimensionality $D_Z$ (Deconfounded). We notice that in all cases, the Time Series Deconfounder enables us to obtain a lower error when estimating the effect of antibiotics, vasopressors, and mechanical ventilator on the patients' white blood cell count, blood pressure, and oxygen saturation. By modeling the dependencies in the assigned treatments for each patient, the factor model part of the Time Series Deconfounder was able to infer latent variables that account for the unobserved information about the patient states. Using these substitutes for the hidden confounders in the outcome models resulted in better estimates of the treatment responses. While these results on real data require further validation from doctors (which is outside the scope of this paper), they indicate the potential of the method to be applied in real medical scenarios. 
	
    In Appendix E, we include results for an additional experimental set-up where we remove several patient covariates from the dataset and we show how the Time Series Deconfounder can be used to account for this bias. Moreover, in Appendix F, we provide further discussion and directions for future work. 

	\section{Conclusion}
	
	The availability of observational data consisting of longitudinal information about patients prompted the development of methods for modeling the effects of treatments on the disease progression in patients. All existing methods for estimating the individualized effects of time-dependent treatment from observational data make the untestable assumption that there are no hidden confounders. In the longitudinal setting, this assumption is even more problematic than in the static setting. As the state of the patient changes over time and the complexity of the treatment assignments and responses increases, it becomes much easier to miss important confounding information.  
    
    In this paper, we proposed the Time Series Deconfounder, a method that takes advantage of the patterns in the multiple treatment assignments over time to infer latent variables that can be used as substitutes for the hidden confounders. Moreover, we developed a deep learning architecture based on an RNN with multitask output and variational dropout for building a factor model over time and computing the latent variables in practice. Through experimental results on both synthetic and real datasets, we show the effectiveness of the Time Series Deconfounder in removing the bias from the estimation of treatment responses over time in the presence of multi-cause hidden confounders. 
	
    \section*{Acknowledgements}
    The authors would like to thank the reviewers for their helpful feedback. The research presented in this paper was supported by The Alan Turing Institute, under the EPSRC grant EP/N510129/1 and by the US Office of Naval Research (ONR).

	\bibliography{refs}
	\bibliographystyle{icml2020}
	
	\appendix
    \onecolumn
    
   	\section{Proof for Theorem 1} \label{apx:proof_th_1}
	
	Before proving Theorem 1, we introduce several definitions and lemmas that will aid with the proof. Note that the these are extended from the static setting in \citet{wang2018blessings}. Remember that at each timestep $t$, the random variable $\vc{Z}_t \in \cl{Z}_t$ is constructed as a function of the history until timestep $t$: $\vc{Z}_t = g(\hist{H}_{t-1})$, where $\hist{H}_{t-1} = (\hist{Z}_{t-1}, \hist{X}_{t-1}, \hist{A}_{t-1})$ takes values in $\bar{\cl{H}}_{t-1} = \bar{\cl{Z}}_{t-1} \times \bar{\cl{X}}_{t-1} \times \bar{\cl{A}}_{t-1}$ and $g: \histcl{H}_{t-1} \rightarrow \cl{Z}$. In order to obtain \textbf{sequential ignorable treatment assignment} using the substitutes for the hidden confounders $\vc{Z}_t$, the following property needs to hold:
	\begin{equation}
	\vc{Y}(\bar{\vc{a}}_{\geq t})  \ci (A_{t1}, \dots, A_{tk}) \mid \bar{\vc{X}}_{t},  \bar{\vc{A}}_{t-1}, \bar{\vc{Z}}_{t},
	\end{equation}
	$ \forall \hist{a}_{\geq t}$ and $\forall t\in \{0, \dots, T\}$.
	
	\begin{definition}
	\textbf{Sequential Kallenberg construction} \\
	At timestep $t$, we say that the distribution of assigned causes $(A_{t1}, \dots A_{tk})$ admits a sequential Kallenberg construction from the random variables  $\vc{Z}_t = g(\hist{H}_{t-1})$ and $\vc{X}_t$ if there exist measurable functions $f_{tj}: \cl{Z}_t \times \cl{X}_t \times [0, 1] \rightarrow \cl{A}_j$ and random variables $U_{jt}\in [0, 1]$, with $ j = 1, \dots, k$ such that:
	\begin{equation}
	A_{tj} = f_{tj}(\vc{Z}_t, \vc{X}_{t}, U_{tj}),
	\end{equation}
	where $U_{tj}$ marginally follow Uniform$[0, 1]$ and jointly satisfy: 
	\begin{equation}
	(U_{t1}, \dots U_{tk}) \ci  \vc{Y}(\hist{a}_{\geq t}) \mid \vc{Z}_t, \vc{X}_{t}, \hist{H}_{t-1}, 
	\end{equation}
	for all $\hist{a}_{\geq t}$. 
	\end{definition}
	
	\begin{lemma}
	\textbf{Sequential Kallenberg construction at each timestep $t$ $\Rightarrow$ Sequential strong ignorability}. If at every timestep $t$, the distribution of assigned causes $(A_{t1}, \dots A_{tk})$ admits a Kallenberg construction from $\vc{Z}_t = g(\hist{H}_{t-1})$ and $\vc{X}_t$ then we obtain sequential strong ignorability. 
	\end{lemma}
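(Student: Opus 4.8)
The plan is to exploit the fact that, under the sequential Kallenberg construction, each cause $A_{tj} = f_{tj}(\vc{Z}_t, \vc{X}_t, U_{tj})$ is a \emph{deterministic} measurable function of the conditioning variables $\vc{Z}_t, \vc{X}_t$ together with the exogenous noise $U_{tj}$. Conditional on $(\vc{Z}_t, \vc{X}_t, \hist{H}_{t-1})$, all of the randomness in the cause vector $(A_{t1}, \dots, A_{tk})$ is channeled through $(U_{t1}, \dots, U_{tk})$, and the construction guarantees that this noise vector is conditionally independent of the potential outcome $\vc{Y}(\hist{a}_{\geq t})$ given the same variables. The conditional independence should therefore transfer directly from the uniforms to the causes, and a relabeling of the conditioning set finishes the argument.

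Concretely, I would proceed in three steps. First, I would invoke the standard measure-theoretic fact that conditional independence is preserved under measurable transformations: if $W \ci Y \mid C$ and $h$ is a measurable map, then $h(W, C) \ci Y \mid C$. Second, I would instantiate this with $W = (U_{t1}, \dots, U_{tk})$, $C = (\vc{Z}_t, \vc{X}_t, \hist{H}_{t-1})$, $Y = \vc{Y}(\hist{a}_{\geq t})$, and $h$ the map sending $W$ and $C$ to $(f_{t1}(\vc{Z}_t, \vc{X}_t, U_{t1}), \dots, f_{tk}(\vc{Z}_t, \vc{X}_t, U_{tk})) = (A_{t1}, \dots, A_{tk})$. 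The defining property of the construction supplies $W \ci Y \mid C$, so the transformation property yields
\begin{equation}
(A_{t1}, \dots, A_{tk}) \ci \vc{Y}(\hist{a}_{\geq t}) \mid \vc{Z}_t, \vc{X}_t, \hist{H}_{t-1}.
\end{equation}
Third, I would rewrite the conditioning set: since $\hist{H}_{t-1} = (\hist{Z}_{t-1}, \hist{X}_{t-1}, \hist{A}_{t-1})$, appending $\vc{Z}_t$ and $\vc{X}_t$ produces exactly $\{\hist{Z}_t, \hist{X}_t, \hist{A}_{t-1}\}$, so the last display is precisely the claimed sequential strong ignorability at timestep $t$; ranging over all $t$ and all $\hist{a}_{\geq t}$ completes the proof.

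The main obstacle will be the careful justification of the transformation step rather than any heavy computation. In particular, one must ensure that the Kallenberg construction delivers joint conditional independence of the \emph{vector} $(U_{t1}, \dots, U_{tk})$ with $\vc{Y}(\hist{a}_{\geq t})$ — not merely marginal independence of each $U_{tj}$ — and that the maps $f_{tj}$ are jointly measurable so that $h$ is measurable. A secondary subtlety is that $\vc{Z}_t = g(\hist{H}_{t-1})$ is itself a deterministic function of the history, so conditioning on $(\hist{Z}_t, \hist{X}_t, \hist{A}_{t-1})$ is well defined and carries the same information as conditioning on $(\vc{Z}_t, \vc{X}_t, \hist{H}_{t-1})$; I would note this explicitly to confirm that no conditioning information is lost or spuriously introduced when passing between the two representations.
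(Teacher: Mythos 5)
Your proposal is correct and follows essentially the same route as the paper's proof: both use the fact that conditional independence is preserved under measurable transformations of variables in the conditioning-augmented independent block (the paper makes this explicit by first enlarging $(U_{t1},\dots,U_{tk})$ to $(\vc{Z}_t,\vc{X}_t,U_{t1},\dots,U_{tk})$ before applying measurability of the $f_{tj}$, which is your $h(W,C)\ci Y \mid C$ step), and both conclude by identifying $(\vc{Z}_t,\vc{X}_t,\hist{H}_{t-1})$ with $(\hist{Z}_t,\hist{X}_t,\hist{A}_{t-1})$. The subtleties you flag — joint rather than marginal independence of the uniforms, and that $\vc{Z}_t=g(\hist{H}_{t-1})$ makes the two conditioning sets equivalent — are exactly the points the paper's argument relies on.
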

	
	\begin{proof}
	Assume that $\cl{A}_j$ for $ j=1,\dots, k$ are Borel spaces. For any $t \in \{1, \dots, T\}$  assume $\cl{Z}_t$ and $\cl{X}_t$ are measurable spaces and assume that $A_{tj} = f_{tj}(\vc{Z}_t, \vc{X}_{t}, U_{tj})$, where $f_{tj}$ are measurable and 
	\begin{equation}
	(U_{t1}, \dots U_{tk}) \ci  \vc{Y}(\hist{a}_{\geq t}) \mid \vc{Z}_t, \vc{X}_{t}, \hist{H}_{t-1},
	\end{equation} 
	for all $\hist{a}_{\geq t}$. This implies that:
	\begin{equation}
	(\vc{Z}_t, \vc{X}_{t}, U_{t1}, \dots U_{tk}) \ci \vc{Y}(\hist{a}_{\geq t}) \mid \vc{Z}_t, \vc{X}_{t}, \hist{H}_{t-1}.
	\end{equation} 
	Since the $A_{tj}$'s are measurable functions of $(\vc{Z}_t, \vc{X}_{t}, U_{t1}, \dots U_{tk})$ and $\hist{H}_{t-1} = (\hist{Z}_{t-1}, \hist{X}_{t-1}, \hist{A}_{t-1})$,  we have that sequential strong ignorability holds:
	\begin{equation}
	(A_{t1}, \dots A_{tk}) \ci \vc{Y}(\hist{a}_{\geq t}) \mid \hist{X}_t, \hist{A}_{t-1}, \hist{Z}_t,
	\end{equation} 
	$ \forall \hist{a}_{\geq t}$ and $\forall t\in \{0, \dots, T\}$.
	\end{proof}
	
	\begin{lemma}
	\textbf{Factor models for the assigned causes $\Rightarrow$ Sequential Kallenberg construction at each timestep $t$}. Under weak regularity conditions, if the  distribution of assigned causes $p(\hist{a}_T)$ can be written as the factor model $p(\theta_{1:k}, \hist{x}_T, \hist{z}_T,  \hist{a}_T)$ then we obtain  a sequential Kallenberg construction for each timestep.
	
	Regularity condition: The domains of the causes $\cl{A}_j$ for $j=1, \dots, k$ are Borel subsets of compact intervals. Without loss of generality, assume $\cl{A}_j = [0, 1]$ for $j=1, \dots, k$.
	\end{lemma}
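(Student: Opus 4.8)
The plan is to represent each assigned cause as a deterministic map of $(\vc{Z}_t,\vc{X}_t)$ together with a private uniform noise, via the conditional quantile (inverse-CDF) construction, and then verify the two clauses in the definition of a sequential Kallenberg construction: that $A_{tj}=f_{tj}(\vc{Z}_t,\vc{X}_t,U_{tj})$ with each $U_{tj}$ marginally Uniform$[0,1]$, and that the noise vector is jointly ignorable, $(U_{t1},\dots,U_{tk})\ci \vc{Y}(\hist{a}_{\geq t})\mid \vc{Z}_t,\vc{X}_t,\hist{H}_{t-1}$. First, for each $j$ I would take the regular conditional distribution function $F_{tj}(\,\cdot\mid \vc{z}_t,\vc{x}_t)$ of $A_{tj}$ given $\vc{Z}_t=\vc{z}_t,\vc{X}_t=\vc{x}_t$; this is exactly where the regularity condition that each $\cl{A}_j$ is a Borel subset of the compact interval $[0,1]$ is used, since it guarantees that these conditional laws and their generalized inverses $f_{tj}(\vc{z}_t,\vc{x}_t,u)=F_{tj}^{-1}(u\mid \vc{z}_t,\vc{x}_t)$ exist and are jointly measurable. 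Defining $U_{tj}$ by the probability integral transform then yields $A_{tj}=f_{tj}(\vc{Z}_t,\vc{X}_t,U_{tj})$ with $U_{tj}$ uniform and independent of $(\vc{Z}_t,\vc{X}_t)$; for binary or otherwise atomic treatments the bare transform must be replaced by the randomized \emph{distributional transform} (adjoining an auxiliary independent uniform to smooth the jumps of $F_{tj}$), which is the only technical subtlety of this first step.

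Second, I would upgrade the marginal statement to a joint one using the factor-model factorization. Because the factor model asserts $p(a_{t1},\dots,a_{tk}\mid \vc{z}_t,\vc{x}_t)=\prod_{j}p(a_{tj}\mid \vc{z}_t,\vc{x}_t)$, the causes $A_{t1},\dots,A_{tk}$ are mutually independent given $(\vc{Z}_t,\vc{X}_t)$, and their conditional law depends on the history only through $(\vc{Z}_t,\vc{X}_t)$. Since each $U_{tj}$ is a measurable function of $(A_{tj},\vc{Z}_t,\vc{X}_t)$, the $U_{tj}$ inherit mutual conditional independence given $(\vc{Z}_t,\vc{X}_t)$, and each being uniform, the vector $(U_{t1},\dots,U_{tk})$ consists of independent Uniform$[0,1]$ variables that are jointly independent of $(\vc{Z}_t,\vc{X}_t)$. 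This delivers the first clause and casts the noises as genuinely \emph{private} randomness, one per cause.

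The third step---establishing the \emph{joint} ignorability with the counterfactual $\vc{Y}(\hist{a}_{\geq t})$---is where I expect the real difficulty, because it is the only place the observed-data factor model must be tied to the never-observed potential outcome, and because per-coordinate independence does not in general imply joint independence (parity/XOR-type examples give $U_{tj}\ci \vc{Y}(\hist{a}_{\geq t})$ for every $j$ yet $(U_{t1},\dots,U_{tk})\not\ci \vc{Y}(\hist{a}_{\geq t})$). I would therefore argue not from the marginal statements alone but from the structural content of the construction: the $U_{tj}$ are exogenous assignment noises, each feeding only into its own cause $A_{tj}$, and by hypothesis $\vc{Z}_t=g(\hist{H}_{t-1})$ is built from the observed history alone, without reference to $\vc{Y}(\hist{a}_{\geq t})$. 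Assumption~\ref{eq:single_strong_ignorability} supplies the missing ingredient: ruling out single-cause hidden confounders means no common exogenous source links any individual $U_{tj}$ to the outcome mechanism beyond what is already captured by $(\vc{Z}_t,\vc{X}_t,\hist{H}_{t-1})$, so that conditioning on the latter blocks every path from the noise vector to $\vc{Y}(\hist{a}_{\geq t})$---formally, a $d$-separation argument on the factor-model graph of Figure~\ref{fig:graphical_model}(b)---which promotes the per-cause ignorability to the required joint statement. Once both clauses are verified, $(A_{t1},\dots,A_{tk})$ admits a sequential Kallenberg construction from $\vc{Z}_t$ and $\vc{X}_t$ at the arbitrary timestep $t$, completing the proof.
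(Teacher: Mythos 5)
Your first two steps track the paper's own proof quite closely: the paper also represents each cause as $A_{tj} = f_{tj}(\vc{Z}_t, \vc{X}_t, U_{tj})$ with uniform noise, obtained from Kallenberg's randomization lemma (Lemma 2.22), and also extracts mutual independence of the per-cause noises from the factor-model factorization --- though the paper additionally layers the noise through the factor-model parameters, writing $U_{tj} = h_{tj}(\theta_j, \omega_{tj})$ with $\omega_{tj}$ uniform, a decomposition your version omits. The genuine gap is your third step, which you correctly identify as the crux and then do not prove. The paper establishes the joint clause $(U_{t1}, \dots, U_{tk}) \ci \vc{Y}(\hist{a}_{\geq t}) \mid \vc{Z}_t, \vc{X}_{t}, \hist{H}_{t-1}$ by an explicit probabilistic computation: Assumption 3 is first transferred from $A_{tj}$ to the noise $\omega_{tj}$; the chain rule is applied to $p(\vc{Y}(\hist{a}_{\geq t}), \omega_{t1}, \dots, \omega_{tk} \mid \vc{X}_{t}, \hist{H}_{t-1})$ and the mutual independence of the $\omega_{tj}$, together with the single-cause ignorability statements, is used to collapse each factor; then $\vc{Z}_t$ enters the conditioning set because it is a function of $\hist{H}_{t-1}$ constructed without knowledge of the potential outcomes, and $\theta_{1:k}$ are treated as point masses, so that $U_{tj} = h_{tj}(\theta_j, \omega_{tj})$ inherits the independence as a measurable function. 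You replace this computation with ``a $d$-separation argument on the factor-model graph of Figure 1(b).'' That is not available as a proof device in this framework: Figure 1(b) is an informal illustration, the potential outcome $\vc{Y}(\hist{a}_{\geq t})$ is not a node of a well-defined causal DAG with declared exogenous-noise semantics, and Assumption 3 is a conditional-independence statement about the observable causes $A_{tj}$, not a structural claim that ``no common exogenous source links $U_{tj}$ to the outcome.'' To turn it into graph language you would need to posit a structural equation model whose exogeneity conditions already encode precisely the joint independence you are trying to derive.

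There is a second, sharper problem specific to your construction. Because you define $U_{tj}$ as the (distributional) probability-integral transform of $A_{tj}$ given $(\vc{Z}_t, \vc{X}_t)$, the noise vector is an almost-surely invertible measurable function of $(A_{t1}, \dots, A_{tk})$ given the conditioning variables, and conversely each $A_{tj}$ is recovered from $U_{tj}$. Hence your required clause $(U_{t1}, \dots, U_{tk}) \ci \vc{Y}(\hist{a}_{\geq t}) \mid \vc{Z}_t, \vc{X}_{t}, \hist{H}_{t-1}$ is \emph{equivalent} to $(A_{t1}, \dots, A_{tk}) \ci \vc{Y}(\hist{a}_{\geq t}) \mid \vc{Z}_t, \vc{X}_{t}, \hist{H}_{t-1}$, which (via Lemma 1) is exactly the sequential strong ignorability the lemma is meant to deliver. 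In your version the Kallenberg construction therefore does no work: all of the content of Theorem 1 sits inside the step you deferred to an unspecified graphical argument. The paper avoids this collapse, at least structurally, by generating the noise on the latent side of the factor model (fresh $\omega_{tj}$ randomizing the kernels $p(a_{tj} \mid \vc{z}_t, \vc{x}_t, \theta_j)$) rather than by inverting the observed causes, so that the postulated factor-model joint distribution and Assumption 3 have something concrete to act on in the factorization computation.
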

	
	The proof for Lemma 2 uses Lemma 2.22 in \citet{kallenberg2006foundations} (kernels and randomization): Let $\mu$ be a probability kernel from a measurable space $S$ to a Borel space $T$. Then there exists some measurable function $f: S \times [0, 1] \rightarrow T$ such that if $\vartheta$ is $U(0, 1)$, then $f(s, \vartheta)$ has distribution $\mu(s, \dot)$ for every $s\in S$. 
	\begin{proof}
	For timestep $t$, consider the random variables $A_{t1} \in \cl{A}_1, \dots A_{tk}\in \cl{A}_k, \vc{X}_t\in \cl{X}_t, \vc{Z}_t = g(\hist{H}_{t-1}) \in \cl{Z}_t$ and $\theta_j \in \Theta$. Assume sequential single strong ignorability holds.
	Without loss of generality, assume $\cl{A}_j = [0, 1]$ for $j=1, \dots, k$. 
	
	From Lemma 2.22 in Kallenberg (1997), there exists some measurable function $f_{tj}: \cl{Z}_t \times \cl{X}_t \times [0, 1] \rightarrow [0, 1]$ such that $U_{tj} \sim \text{Uniform}[0, 1]$ and:
	\begin{equation}
	A_{tj} = f_{tj} (\vc{Z}_t, \vc{X}_t, U_{tj})
	\end{equation}
	and there exists some measurable function $h_{tj}: \Theta \times [0, 1] \rightarrow [0, 1]$ such that: 
	\begin{equation}
	U_{tj} = h_{tj}(\theta_j, \omega_{tj}),
	\end{equation}
	where $\omega_{tj} \sim \text{Uniform}[0, 1]$ and $j=1, \dots, k$. 
	
	From our definition of the factor model we have that $\omega_{tj}$ for  $j=1, \dots, k$ are jointly independent. Otherwise, $A_{tj} = f_{tj} (\vc{Z}_t, \vc{X}_t, h_{tj}(\theta_j, \omega_{tj}))$ would not have been conditionally independent given $\vc{Z}_t, \vc{X}_t$. 
	
	Since sequential single strong ignorability holds at each timestep $t$, we have that $A_{tj}  \ci \vc{Y}(\bar{\vc{a}}_{\geq t})  \mid \vc{X}_{t}, \bar{\vc{H}}_{t-1} \,\,\, \forall \hist{a}\in \histcl{A}$, $\forall t\in \{0, \dots, T\}$ and for $j=1, \dots, k$ which implies:
	\begin{equation} \label{eq:omega_single_independence}
	\omega_{tj} \ci \vc{Y}(\bar{\vc{a}}_{\geq t}) \mid \vc{X}_{t}, \bar{\vc{H}}_{t-1},
	\end{equation}
	$\forall \hist{a}_{\geq t}\text{ and } \forall j \in \{1, \dots, k\}$. Using this, we can write:
	\begin{eqnarray}
	p(\vc{Y}(\hist{a}_{\geq t}), \omega_{t1}, \dots, \omega_{tk} \mid\vc{X}_{t}, \bar{\vc{H}}_{t-1}) &=& p(\vc{Y}(\hist{a}_{\geq t}) \mid \vc{X}_{t},\bar{\vc{H}}_{t-1}) \cdot  p(\omega_{t1}, \dots, \omega_{tk} \mid \vc{Y}(\hist{a}_{\geq t}), \vc{X}_{t}, \bar{\vc{H}}_{t-1}) \nonumber \\
	&=&  p(\vc{Y}(\hist{a}_{\geq t}) \mid \vc{X}_{t},\bar{\vc{H}}_{t-1}) \cdot \nonumber \prod_{j=1}^k p(\omega_{tj} \mid \omega_{t1}, \dots, \omega_{t, j-1},  \vc{Y}(\hist{a}_{\geq t}), \vc{X}_{t}, \bar{\vc{H}}_{t-1}) \nonumber \\
	&=& p(\vc{Y}(\hist{a}_{\geq t}) \mid \vc{X}_{t}, \bar{\vc{H}}_{t-1}) \cdot \prod_{j=1}^k p(\omega_{tj} \mid \vc{X}_{t}, \bar{\vc{H}}_{t-1}) \nonumber \\
	&=& p(\vc{Y}(\hist{a}_{\geq t}) \mid \vc{X}_{t}, \bar{\vc{H}}_{t-1}) \cdot \nonumber  p(\omega_{t1}, \dots, \omega_{tk} \mid \vc{X}_{t}, \bar{\vc{H}}_{t-1})
	\end{eqnarray}
	where the second and third steps follow form equation (\ref{eq:omega_single_independence}) and the fact that $\omega_{t1}, \dots, \omega_{tk}$ are jointly independent. This gives us:
	\begin{equation}
	(\omega_{t1}, \dots, \omega_{tk}) \ci \vc{Y}(\hist{a}_{\geq t}) \mid \vc{X}_{t}, \bar{\vc{H}}_{t-1}
	\end{equation}

	Moreover, since the latent random variable $\vc{Z}_t$ is constructed without knowledge of $\vc{Y}(\hist{a}_{\geq t})$, but rather as a function of the history $\hist{H}_{t-1}$ we have:
	\begin{equation}
	(\omega_{t1}, \dots, \omega_{tk}) \ci \vc{Y}(\hist{a}_{\geq t}) \mid \vc{Z}_t, \vc{X}_{t}, \bar{\vc{H}}_{t-1}.
	\end{equation}
	
	$\theta_{1:k}$ are parameters in the factor model and can be considered point masses, so we also have that:
	\begin{equation}
	(\theta_1, \dots, \theta_k) \ci  \vc{Y}(\hist{a}_{\geq t}) \mid \vc{Z}_t, \vc{X}_t, \hist{H}_{t-1},
	\end{equation} 
	
	Since $U_{tj}= (h_{ij}(\theta_j, \omega_{tj}))$ are measurable functions of $\theta_j$ and $\omega_{tj}$ we have that:
	\begin{equation}
	(U_{t1}, \dots, U_{tk}) \ci  \vc{Y}(\vc{a}_{\geq t}) \mid  \vc{Z}_t, \vc{X}_t, \hist{H}_{t-1}
	\end{equation}
	
	We have thus obtained a sequential Kallenberg construction at timestep $t$. 
	\end{proof}
	
	\begin{theorem}
	If the distribution of the assigned causes $p(\hist{a}_{T})$ can be written as the factor model $p(\theta_{1:k}, \hist{x}_T, \hist{z}_T,  \hist{a}_T)$ then we obtain \textit{sequential ignorable treatment assignment}:
	\begin{equation}
	\vc{Y}(\bar{\vc{a}}_{\geq t})  \ci (A_{t1}, \dots, A_{tk}) \mid \bar{\vc{X}}_{t}, \bar{\vc{Z}}_{t}, \bar{\vc{A}}_{t-1},  
	\end{equation}
	for all $\hist{a}_{\geq t}$ and for all $t\in \{0, \dots, T\}$. 
	\end{theorem}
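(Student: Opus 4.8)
The plan is to obtain Theorem 1 as an immediate consequence of the two lemmas already established in this appendix, by chaining them in sequence. The factor-model hypothesis $p(\hist{a}_T) = p(\theta_{1:k}, \hist{x}_T, \hist{z}_T, \hist{a}_T)$ is exactly the antecedent of Lemma 2, and the conclusion of Lemma 1 is exactly the sequential ignorable treatment assignment claimed by the theorem; so the proof reduces to checking that the output of Lemma 2 is precisely the input required by Lemma 1, at every timestep $t$.

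First I would invoke Lemma 2. Under the stated regularity condition that each $\cl{A}_j$ is a Borel subset of a compact interval (taken, without loss of generality, to be $[0,1]$) and under Assumption 3 (sequential single strong ignorability), the factorization yields, for each $t \in \{1, \dots, T\}$, a sequential Kallenberg construction of $(A_{t1}, \dots, A_{tk})$ from $\vc{Z}_t = g(\hist{H}_{t-1})$ and $\vc{X}_t$. Concretely, this produces measurable functions $f_{tj}$ and uniform noise variables $U_{tj}$ with $A_{tj} = f_{tj}(\vc{Z}_t, \vc{X}_t, U_{tj})$ and $(U_{t1}, \dots, U_{tk}) \ci \vc{Y}(\hist{a}_{\geq t}) \mid \vc{Z}_t, \vc{X}_t, \hist{H}_{t-1}$ for every $\hist{a}_{\geq t}$. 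I would then feed this directly into Lemma 1: since a sequential Kallenberg construction is available at every timestep, Lemma 1 gives $\vc{Y}(\hist{a}_{\geq t}) \ci (A_{t1}, \dots, A_{tk}) \mid \hist{X}_t, \hist{A}_{t-1}, \hist{Z}_t$ for all $\hist{a}_{\geq t}$ and all $t$, which is precisely the statement of Theorem 1 (conditioning on $\hist{H}_{t-1} = (\hist{Z}_{t-1}, \hist{X}_{t-1}, \hist{A}_{t-1})$ together with $\vc{Z}_t, \vc{X}_t$ is the same as conditioning on $\hist{Z}_t, \hist{X}_t, \hist{A}_{t-1}$).

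The real difficulty does not lie in this composition, which is routine, but is entirely absorbed into the proof of Lemma 2. The main obstacle there is to upgrade the $k$ separate single-cause independences supplied by Assumption 3 into a single \emph{joint} conditional independence of the noise vector from the potential outcome. This hinges on two facts: that the primitive noise variables $\omega_{tj}$ underlying the Kallenberg representation are jointly independent (forced by the conditional-independence structure of the factor model, since otherwise the $A_{tj}$ could not be conditionally independent given $\vc{Z}_t, \vc{X}_t$), and that each $\omega_{tj}$ is independent of $\vc{Y}(\hist{a}_{\geq t})$ given the history (from Assumption 3). Combining these via the factorization of $p(\vc{Y}(\hist{a}_{\geq t}), \omega_{t1}, \dots, \omega_{tk} \mid \vc{X}_t, \hist{H}_{t-1})$ yields joint independence, and measurability of $U_{tj}$ in $(\theta_j, \omega_{tj})$ transports it to the $U_{tj}$. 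A final point to verify is that $\vc{Z}_t$ may be adjoined to the conditioning set for free, which is justified because $\vc{Z}_t$ is a deterministic function of $\hist{H}_{t-1}$ constructed without any reference to the potential outcomes; this is the step that makes the substitute confounder behave like a genuine confounder for ignorability purposes.
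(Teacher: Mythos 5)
Your proposal is correct and follows exactly the paper's own proof: Theorem 1 is obtained by invoking Lemma 2 (the factor model yields a sequential Kallenberg construction at each timestep, using Assumption 3) and then Lemma 1 (a sequential Kallenberg construction at every timestep yields sequential strong ignorability). Your additional commentary on where the substantive work lies --- the joint independence of the noise variables $\omega_{tj}$, the upgrade from single-cause to joint conditional independence, and adjoining $\vc{Z}_t$ because it is built from $\hist{H}_{t-1}$ without reference to the potential outcomes --- accurately summarizes the content of the paper's Lemma 2 proof.
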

	
	\begin{proof}
	Theorem 1 follows from Lemmas 1 and 2. In particular, using the proposed factor graph, we can obtain a sequential Kallenberg construction at each timestep and then obtain sequential strong ignorability. 
	\end{proof}

	\section{Implementation Details for the Factor Model} \label{apx:hyperparam}
	
	The factor model described in Section 5 was implemented in Tensorflow \citep{tensorflow2015-whitepaper} and trained on an NVIDIA Tesla K80 GPU. For each synthetic dataset (simulated as described in Section 6.1), we obtained 5000 patients, out of which 4000 were used for training, 500 for validation, and 500 for testing. Using the validation set, we perform hyperparameter optimization using 30 iterations of random search to find the optimal values for the learning rate, minibatch size (M), RNN hidden units, multitask FC hidden units and RNN dropout probability. LSTM \citep{hochreiter1997long} units are used for the RNN implementation. The search range for each hyperparameter is described in Table \ref{tab:hyperparameters_factor_model}. 
    
    The trajectories for the patients do not necessarily have to be equal. However, to be able to train the factor model, we zero-padded them such that they all had the same length. The patient trajectories were then grouped into minibatches of size M and the factor model was trained using the Adam optimizer \cite{kingma2014adam} for 100 epochs. 
	
	\begin{table}[ht]
		\caption{Hyperparameter search range for the proposed factor model implemented using a recurrent neural network with multitask output and variational dropout.}
		\label{tab:hyperparameters_factor_model}
		\vskip 0.15in
		\begin{center}
			\begin{small}
				\begin{tabular}{cc}
					\toprule
					Hyperparameter & Search range \\
					\hline
					\hline
					Learning rate &    0.01, 0.001, 0.0001    \\
					Minibatch size &  64, 128, 256  \\
					RNN hidden units &  32, 64, 128, 256  \\
					Multitask FC hidden units & 32, 64, 128\\
					RNN dropout probability &  0.1, 0.2, 0.3, 0.4, 0.5 \\
					\bottomrule
				\end{tabular}
			\end{small}
		\end{center}
		\vskip -0.1in
	\end{table}
	
	Table \ref{tab:hyperparameters_confounding} illustrates the optimal hyperparameters obtained for the factor model under the different amounts of hidden confounding applied (as described by the experiments in Section 6.1). Since the results for assessing the Time Series Deconfounder are averaged across  30 different simulated datasets, we report here the optimal hyperparameters identified through majority voting. We note that when the effect of the hidden confounders on the treatment assignments and the outcome is large, more capacity is needed in the factor model to be able to infer them.
	
	\begin{table*}[ht]
		\caption{Optimal hyperparameters for the factor model when different amounts of hidden confounding are applied in the synthetic dataset. The parameter $\gamma$ measures the amount of hidden confounding applied.  }
		\label{tab:hyperparameters_confounding}
		\vskip 0.15in
		\begin{center}
			\begin{small}
				\begin{tabular}{cccccc}
					\toprule
					Hyperparameter & $\gamma=0$ & $\gamma=0.2$ & $\gamma=0.4$  & $\gamma=0.6$ & $\gamma=0.8$\\
					\hline
					\hline
					Learning rate &    0.01  & 0.01 & 0.01 & 0.01& 0.001\\
					Minibatch size &  64 & 64 & 64 & 64 & 128 \\
					RNN hidden units & 32 & 64 & 64 & 128 & 128\\
					Multitask FC hidden units & 64 & 128 & 64 & 128 & 128\\
					RNN dropout probability &  0.2 & 0.2 &  0.1 & 0.3 & 0.3\\
					\bottomrule
				\end{tabular}
			\end{small}
		\end{center}
		\vskip -0.1in
	\end{table*}

	\begin{figure*}[ht]
		\vskip 0.2in
		\begin{center}
			\centerline{\includegraphics[width=1\columnwidth]{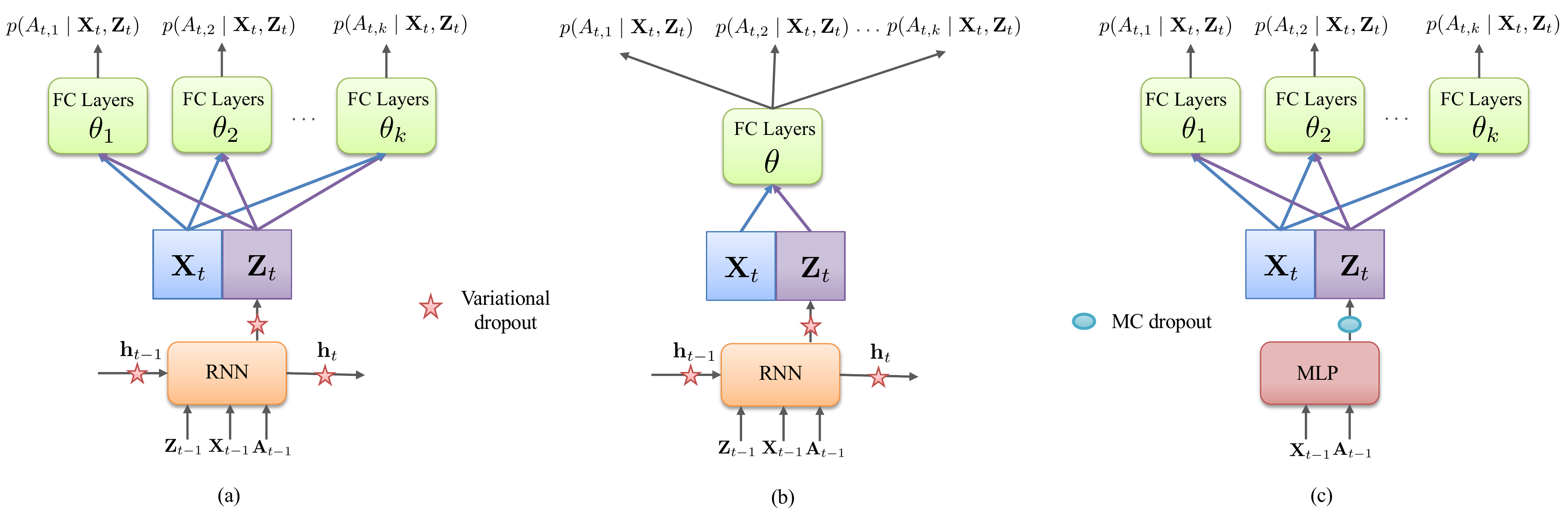}}
			\caption{(a) Proposed factor model using a recurrent neural network with multitask output and variational dropout. (b) Alternative design without multitask output. (c) Factor model using an MLP (shared across timestep) and multitask output. This baseline does not capture time-dependencies. MC dropout \cite{gal2016dropout} is applied in the MLP to be able to sample from the substitutes for the hidden confounders.}
			\label{fig:factor_model_baseline}
		\end{center}
		\vskip -0.2in
	\end{figure*} 
	
	\newpage
	\section{Baselines for Evaluating Factor Model} \label{apx:baseline_factor_model}

    Figure \ref{fig:factor_model_baseline} illustrates the architecture at each timestep for our proposed factor model and the baselines used for comparison. Figure \ref{fig:factor_model_baseline}(a) represents our proposed architecture for the factor model consisting of a recurrent neural network with multitask output and variational dropout. We want to ensure that the multitask constraint does not cause a decrease in the capability of the network to capture the distribution of the assigned causes. To do so, we compare our proposed factor model with the network in Figure \ref{fig:factor_model_baseline}(b) where we predict the $k$ treatment assignments by passing $\vc{X}_t$ and $\vc{Z}_t$ through a hidden layer and having an output layer with $k$ neurons. Moreover, to highlight the importance of learning time-dependencies to estimate the substitutes for the hidden confounders, we also use as a baseline the factor model in Figure \ref{fig:factor_model_baseline}(c). In this case, a multilayer perceptron (MLP) is shared across the timesteps and it infers the latent variable $Z_t$ using only the previous covariates and treatments. Note that in this case there is no dependency on the entire history.
	
	The baselines were optimised under the same set-up described for our proposed factor model in Appendix \ref{apx:hyperparam}. Tables \ref{tab:hyperparameters_multitaks} and \ref{tab:hyperparameters_mlp} describe the search ranges used for the hyperparameters in each of the baselines. 
	
	\begin{table}[ht]
		\caption{Hyperparameter search range for factor model without multitask (Figure \ref{fig:factor_model_baseline}(b)).}
		\label{tab:hyperparameters_multitaks}
		\vskip 0.15in
		\begin{center}
			\begin{small}
				\begin{tabular}{cc}
					\toprule
					Hyperparameter & Search range \\
					\hline
					\hline
					Learning rate &    0.01, 0.001, 0.0001    \\
					Minibatch size &  64, 128, 256  \\
					Max gradient norm & 1.0, 2.0, 4.0  \\
					RNN hidden units &  32, 64, 128, 256  \\
					Multitask FC hidden units & 32, 64, 128\\
					RNN dropout probability &  0.1, 0.2, 0.3, 0.4, 0.5 \\
					\bottomrule
				\end{tabular}
			\end{small}
		\end{center}
		\vskip -0.1in
	\end{table}
	
	\begin{table}[ht]
		\caption{Hyperparameter search range for MLP factor model. Figure \ref{fig:factor_model_baseline}(c))}
		\label{tab:hyperparameters_mlp}
		\vskip 0.15in
		\begin{center}
			\begin{small}
				\begin{tabular}{cc}
					\toprule
					Hyperparameter & Search range \\
					\hline
					\hline
					Learning rate &    0.01, 0.001, 0.0001    \\
					Minibatch size &  64, 128, 256  \\
					MLP hidden layer size &  32, 64, 128, 256  \\
					Multitask FC hidden units & 32, 64, 128\\
					MLP dropout probability &  0.1, 0.2, 0.3, 0.4, 0.5 \\
					\bottomrule
				\end{tabular}
			\end{small}
		\end{center}
		\vskip -0.1in
	\end{table}

	\section{Outcome Models}
	
	After inferring the substitutes for the hidden confounders using the factor model, we implement outcome models to estimate the individualised treatment responses:
	\begin{equation}
	\mathbb{E} [\vc{Y}_{t+1}(\vc{a}_t) \mid \hist{A}_{t-1}, \hist{X}_t, \hist{Z}_t] = h(\hist{A}_t, \hist{X}_t, \hist{Z}_t)
	\end{equation}
	We train the outcome models and evaluate them on predicting the treatment responses for each timestep, i.e. one-step-ahead predictions, for the patients in the test set. For training and tuning the outcome models, we use the same train/validation/test splits that we have used for the factor model. This means that the substitutes for the hidden confounders estimated using the fitted factor model on the test set are also used for testing purposes in the outcome models.
	
	\subsection{Marginal Structural Models} \label{apx:marginal_structural_models}
	
	MSMs \citep{robins2000marginal, hernan2001marginal} have been widely used in epidemiology to perform causal inference in longitudinal data. MSMs use inverse probability of treatment weighting during training to construct a pseudo-population from the observational data that resembles the one in a clinical trial and thus remove the bias introduced by time-dependent confounders \citep{platt2009time}. The propensity scores for each timestep are computed as follows: 
	\begin{eqnarray}
	\text{SW}_t = \dfrac{f(\vc{A}_t \mid \hist{A}_{t-1})}{ f(\vc{A}_t \mid \hist{X}_{t}, \hist{Z}_t, \hist{A}_{t-1})} =  \dfrac{\prod_{j=1}^{k} f(A_{t,j} \mid \hist{A}_{t-1})}{\prod_{j=1}^{k} f(A_{t,j} \mid \hist{X}_{t}, \hist{Z}_t, \hist{A}_{t-1})}
	\end{eqnarray}
	where $f(\cdot)$ is the conditional probability mass function for discrete treatments and the conditional probability density function for continuous treatments. We adopt the implementation in \citet{hernan2001marginal, howe2012estimating} for MSMs and estimate the propensity weights using logistic regression as follows: 
	\begin{equation}
	f(A_{t,k} \mid \hist{A}_{t-1}) = \sigma \Big(\sum_{j=1}^k \omega_{k}  (\sum_{i=1}^{t-1} A_{t,j}) \Big)
	\end{equation}
	\begin{equation}
	f(A_{t,k} \mid  \hist{X}_{t}, \hist{Z}_t, \hist{A}_{t-1}) = \sigma \Big(\sum_{j=1}^k \phi_{k}  (\sum_{i=1}^{t-1} A_{t,j})   
	+ \vc{w}_1 \vc{X}_t + \vc{w}_2 \vc{X}_{t-1} + \vc{w}_3 \vc{Z}_t + \vc{w}_4 \vc{Z}_{t-1}   \Big)
	\end{equation}
	where $\omega_{\star}, \phi_{\star}$ and $\vc{w}_{\star}$ are regression coefficients and $\sigma(\cdot)$ is the sigmoid function. 
	
	For predicting the outcome, the following regression model is used, where each individual patient is weighted by its propensity score:
	\begin{equation}
	h(\hist{A}_t, \hist{X}_t, \hist{Z}_t) = \sum_{j=1}^k \beta_{k}  (\sum_{i=1}^{t} A_{t,j}) + \vc{l}_1 \vc{X}_t  
	+ \vc{l}_2 \vc{X}_{t-1} + \vc{l}_3 \vc{Z}_t + \vc{l}_4 \vc{Z}_{t-1}   
	\end{equation}
	where $\beta_{\star}$ and $\vc{l}_{\star}$ are regression coefficients. Since MSMs do not require hyperparameter tuning, we train them on the patients from both the train and validation sets.

	\subsection {Recurrent Marginal Structural Networks} \label{apx:recurrent_marginal_structural_networks}
	
	R-MSNs, implemented as descried in \citet{lim2018forecasting}\footnote{We used the publicly available immlementation from \url{https://github.com/sjblim/rmsn_nips_2018.}}, use  recurrent neural networks to estimate the propensity scores and to build the outcome model. The use of RNNs is more robust to changes in the treatment assignment policy. Moreover, R-MSNs represent the first application of deep learning in predicting time-dependent treatment effects. The propensity weights are estimated using recurrent neural networks as follows:
	\begin{align}
	f(A_{t,k} \mid \hist{A}_{t-1}) = \text{RNN}_1(\hist{A}_{t-1}) && f(A_{t,k} \mid  \hist{X}_{t}, \hist{Z}_t, \hist{A}_{t-1}) = \text{RNN}_2(\hist{X}_{t}, \hist{Z}_t, \hist{A}_{t-1})
	\end{align}
	
	For predicting the outcome, the following prediction network is used:
	\begin{eqnarray}
	h(\hist{A}_t, \hist{X}_t, \hist{Z}_t) =  \text{RNN}_3(\hist{X}_{t}, \hist{Z}_t, \hist{A}_{t}),
	\end{eqnarray}
	where in the loss function, each patient is weighted by its propensity score. Since the purpose of our method is not to improve predictions, but rather to assess how well the R-MSNs can be deconfounded using our method, we use the optimal hyperparameters for this model, as identified by \citet{lim2018forecasting}. R-MSNs are then trained on the combined set of patients from the training and validation sets. 
	
	\begin{table}[h]
		\caption{Hyperparameters used for R-MSN.}
		\label{tab:hyperparameters_rmsn}
		\vskip 0.15in
		\begin{center}
			\begin{small}
				\begin{tabular}{c|ccc}
					\toprule
					Hyperparameter & \multicolumn{2}{c}{Propensity networks}  & \multirow{2}{1.1cm}{Prediction network} \\
					& $f(\vc{A}_t\mid \hist{A}_{t-1})$ & $f(\vc{A}_t\mid \hist{H}_{t})$ &   \\
					\hline
					Dropout rate &    0.1 & 0.1 & 0.1    \\
					State size & 6 & 16 & 16  \\
					Minibatch size & 128 & 64 & 128  \\
					Learning rate & 0.01 & 0.01 & 0.01 \\
					Max norm & 2.0 & 1.0 & 0.5 \\
					\bottomrule
				\end{tabular}
			\end{small}
		\end{center}
		\vskip -0.1in
	\end{table}

	R-MSNs \citep{lim2018forecasting}, can also be used to forecast treatment responses for an arbitrary number of steps in the future. In our paper we focus on one-step ahead predictions of the treatment responses. However, the Time Series Deconfounder can also be applied to estimate the effects of a sequence of future treatments. 
	
	\newpage
	
	\section{Additional Results}
	
	\subsection{Experiments on Synthetic Data}
	
	\begin{figure*}[t]
		\begin{center}
			\centerline{\includegraphics[width=1\columnwidth]{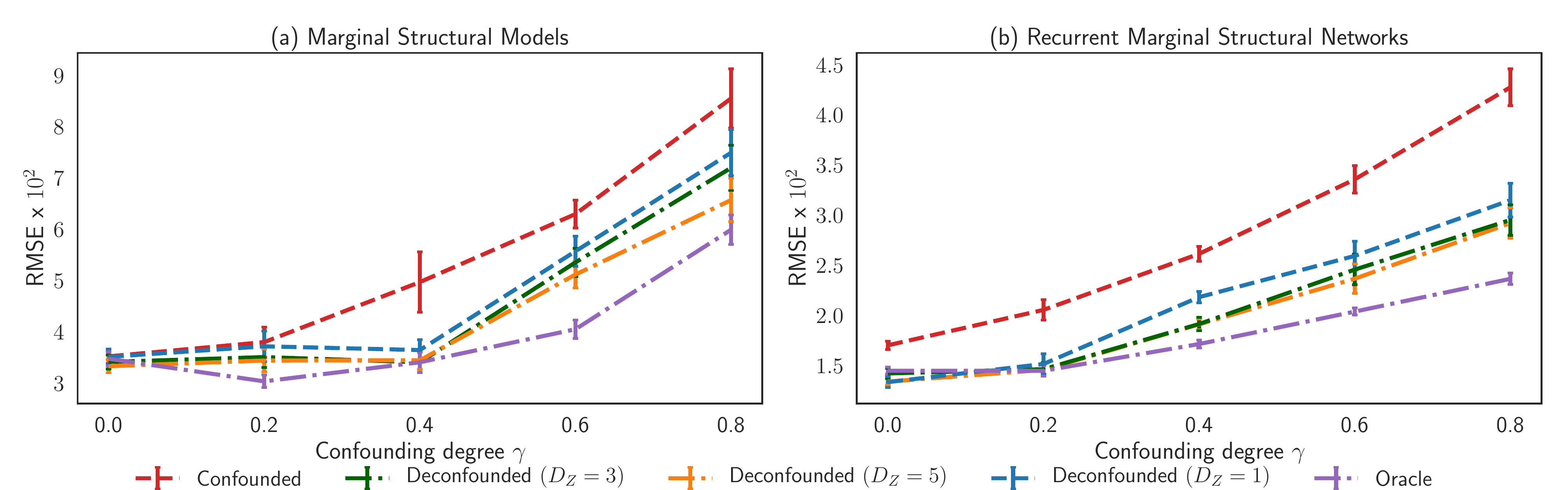}}
			\caption{Results for deconfounding one-step ahead estimation of treatment responses in two outcome models: (a) Marginal Structural Models (MSM) and (b) Recurrent Marginal Structural Networks (R-MSN). The simulated (true) size of the hidden confounders is $D_Z = 3$. The average RMSE and the standard error in the results are computed for 30 dataset simulations for each different degree of confounding, as measured by $\gamma$. }
			\label{fig:deconfounding_eval_3}
		\end{center}
		\vspace{-0.8cm}
	\end{figure*}
	
	We considered an additional experimental set-up where we have simulated hidden confounders of dimension $D_Z = 3$. In Figure \ref{fig:deconfounding_eval_3} we illustrate the root mean squared error (RMSE) for one-step-ahead estimation of treatment responses for patients in the test set without adjusting for the bias from the hidden confounders (Confounded), when using the simulated hidden confounders (Oracle) and after applying the Time Series Deconfounder with different model specifications (Deconfounded). We notice that the Time Series Deconfounder can still account for the bias from hidden confounders when the true size for the hidden confounders is underestimated in the factor model and set to ($D_Z = 1$). The performance is improved when setting $D_Z$ to the true number of hidden confounders or when overestimating the number of hidden confounders.

	\subsection{Model of Tumour Growth}
	
	To show the applicability of our method in a more realistic simulation, we use the pharmacokinetic-pharmacodynamic (PK-PD) model of tumor growth under the effects of chemotherapy and radiotherapy proposed by \citet{geng2017prediction}. The tumor volume after $t$ days since diagnosis is modeled as follows:
	\begin{equation}
	\begin{aligned}
	V(t) ={}  \Big( 1 +  \underbrace{\rho \text{log}(\dfrac{K}{V(t-1)})}_{\text{Tumor growth}} - \underbrace{\beta_c C(t)}_{\text{Chemotherapy}} - 
	\underbrace{\left(\alpha_r d(t) + \beta_r d(t)^2\right)}_{\text{Radiotherapy}} + \underbrace{e_t}_{\text{Noise}}  \Big)V(t-1) 
	\end{aligned}
	\end{equation}
	where  $K, \rho, \beta_c, \alpha_r, \beta_r, e_t $ are sampled as described in \citet{geng2017prediction}. $C(t)$ is the chemotherapy drug concentration and $d(t)$ is the dose of radiation. Chemotherapy and radiotherapy prescriptions are modeled as Bernoulli random variables that depend on the tumor size. Full details about treatments are in \citet{ lim2018forecasting}. 
	
	\begin{table}[ht]
		\caption{Average  RMSE $\times 10^2$  (normalised by the maximum tumour volume) and the standard error in the results for predicting the effect of chemotherapy and radiotherapy on the tumour volume.}
		\label{tab:pk_pd}
		\begin{center}
			\begin{small}
				\begin{tabular}{c|cc}
					\toprule
					Outcome model & MSM & R-MSN  \\
					\hline
					Confounded  & 7.29 $\pm$ 0.14   &  5.31 $\pm$ 0.16   \\
					Deconfounded ($D_Z = 1$) &6.47 $\pm$ 0.16  & 4.76 $\pm$ 0.17 \\
					Deconfounded ($D_Z = 5$)  & 6.25 $\pm$ 0.14 & 4.79 $\pm$ 0.19   \\
					Deconfounded ($D_Z = 10$)  & 6.31 $\pm$ 0.11  & 4.54 $\pm$ 0.17 \\
					\hline
					Oracle &  6.92 $\pm 0.19$  & 5.00 $\pm$ 0.15 \\
					\bottomrule
				\end{tabular}
			\end{small}
		\end{center}
	\end{table}
	
    To account for patient heterogeneity due to genetic features \citep{bartsch2007genetic}, the prior means for $\beta_c$ and  $\alpha_r$ are adjusted according to three patient subgroups as described in \citet{lim2018forecasting}. The patient subgroup $S^{(i)} \in \{1, 2, 3\}$ represents a confounder because it affects the tumor growth and subsequently the treatment assignments. We reproduced the experimental set-up in \citet{lim2018forecasting} and simulated datasets with 10000 patients for training, 1000 for validation, and 1000 for testing. We simulated 30 datasets and averaged the results for testing the MSM and R-MSN outcome models without the information about patient types (confounded),  with the true simulated patient types, as well as after applying the Time Series Deconfounder with $D_Z \in \{1, 5, 10\}$.  
    
    The results in Table \ref{tab:pk_pd} indicate that our method can infer substitutes for static hidden confounders such as patient subgroups which affect the treatment responses over time.  By construction, $\hist{Z}_t$ also captures time dependencies which help with the prediction of outcomes. This is why the performance of the deconfounded models is slightly better than of the oracle model which uses static patient groups. 
	
	\subsection{MIMIC III}
	
	We performed an additional experiment using the dataset extracted from the MIMIC III database where we have removed 3 patient covariates from the dataset (temperature, glucose, hemoglobin). In Table \ref{tab:mimic_rm} we report the results for estimating the effects of antibiotics, vasopressors, and mechanical ventilator on the patient's white blood cell count when including all variables, after removing these 3 patient covariates (which we notice that further confound the results) and after applying the Time Series Deconfounder with different settings for $D_Z$. 
	
	\begin{table*}[ht]
		\caption{Average  RMSE $\times 10^2$  and the standard error in the results for predicting the effect of antibiotics, vasopressors, and mechanical ventilator on white blood cell count. The results are for 10 runs.}
		\label{tab:mimic_rm}
		\begin{center}
			\begin{small}
				\begin{tabular}{c|cc}
					\toprule
					& \multicolumn{2}{c}{White blood cell}  \\
					Outcome model  & MSM & R-MSN   \\
					\hline
					All patient covariates  & $3.90 \pm 0.00$  &  $2.91 \pm 0.05$  \\
					Removed 3 covariates  & $4.12 \pm 0.00$  &  $3.11 \pm 0.03$  \\
					\hline
					Deconfounded ($D_Z = 1$) & $3.98 \pm 0.02$  & $3.05 \pm 0.05$ \\
					Deconfounded ($D_Z = 3$)  &  $3.91 \pm 0.03$ & $2.87 \pm 0.08$  \\
					Deconfounded ($D_Z = 5$)  & $3.85 \pm 0.04$  & $2.81 \pm 0.03$ \\
					\bottomrule
				\end{tabular}
			\end{small}
		\end{center}
		\vspace{-0.4cm}
	\end{table*}
	
	\section{Discussion}
	
	The Time Series Deconfounder firstly builds a factor model to infer substitutes for the multi-cause hidden confounders. If Assumption 3 holds and the fitted factor model captures well the distribution of the assigned causes, which can be assessed through predictive checks, the substitutes for the hidden confounders help us obtain sequential strong ignorability (Theorem 1).  Then, the Time Series Deconfounder uses the inferred substitutes for the hidden confounders in an outcome model that estimates individualized treatment responses. The experimental results show the applicability of the Time Series Deconfounder both in a controlled simulated setting and in a real dataset consisting of electronic health records from patients in the ICU. In these settings, the Time Series Deconfounder was able to remove the bias from hidden confounders when estimating treatment responses conditional on patient history. 
	
	In the static causal inference setting, several methods have been proposed to extend the deconfounder algorithm in \citet{wang2018blessings}. For instance, \citet{wang2019multiple} augment the theory in the deconfounder algorithm in \citet{wang2018blessings} by extending it to causal graphs and show that by using some of the causes as proxies of the shared confounder in the outcome model one can identify the effects of the other causes. \citet{d2019multi} also suggests using proxy variables to obtain non-parametric identification of the mean potential outcomes \cite{miao2018identifying}. Additionally, \citet{kong2019multi} proves that identification of causal effects is possible in the multi-cause setting when the treatments are normally distributed and the outcome is binary and follows a logistic structural equation model. 
	
	For the Time Series Deconfounder, similarly to \citet{wang2018blessings}, identifiability can be assessed by computing the uncertainty in the outcome model estimates, as described in Section 4.2. When the treatment effects are non-identifiable, the Time Series Deconfounder estimates will have high variance. Thus, future work could explore building upon the results in \citet{wang2019multiple} and \citet{d2019multi} and using proxy variables in the outcome model to prove identifiability of causal estimates in the multi-cause time-series setting.
	
	\newpage
	\bibliography{refs_appendix}
	\bibliographystyle{icml2020}
\end{document}